\documentclass{article} 
\usepackage{iclr2023_conference,times}


\usepackage{amsmath,amsfonts,bm}









\def\eqref#1{equation~\ref{#1}}









\def\1{\bm{1}}










\def\brho{{\bm{\rho}}}

\DeclareMathAlphabet{\mathsfit}{\encodingdefault}{\sfdefault}{m}{sl}
\SetMathAlphabet{\mathsfit}{bold}{\encodingdefault}{\sfdefault}{bx}{n}











\newcommand{\R}{\mathbb{R}}



\usepackage{hyperref}
\usepackage{url}
\usepackage{enumitem}
\usepackage{amssymb}

\setlist{leftmargin=3.5mm}
\title{Batch Normalization Explained}


\author{Randall Balestriero\\
Meta AI, FAIR\\
New York, USA \\
\texttt{rbalestriero@fb.com} \\
\And
Richard~G.~Baraniuk\\
ECE Department, Rice University\\
Texas, USA\\
\texttt{richb@rice.edu}
}

%

\iclrfinalcopy 

\usepackage{bbm}

\def \V{\mathcal{V}}
\def \H{\mathcal{H}}
\def \L{\mathcal{L}}
\def \B{\mathcal{B}}
\def \X{\mathcal{X}}

\def \F{\mathcal{F}}

\def \bZero{\boldsymbol{0}}
\def \bOne{\boldsymbol{1}}

\def \bx{\boldsymbol{x}}

\def \by{\boldsymbol{y}}
\def \bz{\boldsymbol{z}}
\def \bm{\boldsymbol{m}}
\def \bv{\boldsymbol{v}}

\def \bh{\boldsymbol{h}}

\def \bu{\boldsymbol{u}}
\def \bv{\boldsymbol{v}}
\def \bb{\boldsymbol{b}}
\def \bc{\boldsymbol{c}}

\def \bq{\boldsymbol{q}}
\def \bm{\boldsymbol{m}}
\def \bw{\boldsymbol{w}}

\def \ba{\boldsymbol{a}}
\def \bA{\boldsymbol{A}}

\def \bW{\boldsymbol{W}}

\def \bQ{\boldsymbol{Q}}

\def \bbeta{\boldsymbol{\beta}}
\def \bgamma{\boldsymbol{\gamma}}
\def \bmu{\boldsymbol{\mu}}
\def \bsigma{\boldsymbol{\sigma}}

\def\R{{\mathbb R}}

\def\Indic{\mathbbm{1}}

\newcommand{\diag}[0] {{ \text{diag} }}

\newcommand {\richb}[1]{{\color{blue}\sf{[richb: #1]}}}

\usepackage{amsthm}
 
\newtheorem{thm}{Theorem}
\newtheorem{prop}{Proposition}
\newtheorem{cor}{Corollary}
\newtheorem{defn}{Definition}

\usepackage{tikz}
\begin{document}

\maketitle

\begin{abstract}
A critically important, ubiquitous, and yet poorly understood ingredient in modern deep networks (DNs) is batch normalization (BN), which centers and normalizes the feature maps.
To date, only limited progress has been made understanding why BN boosts DN learning and inference performance; work has focused exclusively on showing that BN smooths a DN's loss landscape.
In this paper, we study BN theoretically from the perspective of function approximation; we exploit the fact that most of today's state-of-the-art DNs are continuous piecewise affine (CPA) splines that fit a predictor to the training data via affine mappings defined over a partition of the input space (the so-called ``linear regions'').
{\em We demonstrate that BN is an unsupervised learning technique that -- independent of the DN's weights or  gradient-based learning -- adapts the geometry of a DN's spline partition to match the data.}
BN provides a ``smart initialization'' that boosts the performance of DN learning, because it adapts even a DN initialized with random weights to align its spline partition with the data.
We also show that the variation of BN statistics between mini-batches introduces a dropout-like random perturbation to the partition boundaries and hence the decision boundary for classification problems.
This per mini-batch perturbation reduces overfitting and improves generalization by increasing the margin between the training samples and the decision boundary.

\end{abstract}

\section{Introduction}

Deep learning has made major impacts in a wide range of applications.
Mathematically, a deep (neural) network (DN) maps an input vector $\bx$ to a sequence of $L$ {\em feature maps} $\bz_\ell$, $\ell=1,\dots,L$ by successively applying the simple nonlinear transformation (termed a DN {\em layer})
\begin{equation}
    \bz_{\ell+1}= \ba\left(\bW_{\ell}\bz_{\ell}+\bc_{\ell}\right), 
    \quad \ell=0,\dots,L-1
    \label{eq:no_BN}
\end{equation}
with $\bz_0=\bx$, $\bW_\ell$ the weight matrix, $\bc_\ell$ the bias vector, and $\ba$ an activation operator that applies a scalar nonlinear activation function $a$ to each element of its vector input. The structure of $\bW_\ell,\bc_{\ell}$ controls the type of layer (e.g., circulant matrix for convolutional layer).
For regression tasks, the DN prediction is simply $\bz_L$,  while for classification tasks, $\bz_L$ is often processed through a softmax operator \cite{goodfellow2016deep}.
The DN parameters $\bW_\ell, \bc_\ell$
are learned from a collection of training data samples $\mathcal{X}=\{ \bx_i, i=1,\dots,n\}$ (augmented with the corresponding ground-truth labels $\by_i$ in supervised settings) by optimizing an objective function (e.g., squared error or cross-entropy). Learning is typically performed via some flavor of stochastic gradient descent (SGD) over randomized mini-batches of training data samples $\mathcal{B}\subset\mathcal{X}$ \cite{goodfellow2016deep}.

While a host of different DN architectures have been developed over the past several years, modern, high-performing DNs nearly universally employ {\em batch normalization} (BN) \cite{ioffe2015batch} to center and normalize the entries of the feature maps using four additional parameters $\bmu_\ell,\bsigma_\ell,\bbeta_\ell,\bgamma_\ell$.
Define $z_{\ell,k}$ as $k^{\rm th}$ entry of feature map $\bz_\ell$ of length $D_\ell$, 
$\bw_{\ell,k}$ as the $k^{\rm th}$ row of the weight matrix $\bW_\ell$, 
and $\mu_{\ell,k},\sigma_{\ell,k},\beta_{\ell,k},\gamma_{\ell,k}$ as the $k^{\rm th}$ entries of the BN parameter vectors $\bmu_\ell,\bsigma_\ell,\bbeta_\ell,\bgamma_\ell$, respectively.
Then we can write the BN-equipped layer $\ell$ mapping extending (\ref{eq:no_BN}) as
\begin{equation}
    z_{\ell+1,k}=
    a\left(
    \frac{\left\langle \bw_{\ell,k},\bz_{\ell}\right\rangle-\mu_{\ell,k}}{\sigma_{\ell,k}}
    \: \gamma_{\ell,k} + \beta_{\ell,k}
    \right),k=1,\dots,D_\ell.
\label{eq:BN}
\end{equation}
The parameters $\bmu_\ell,\bsigma_\ell$ are computed as the element-wise mean and standard deviation of $\bW_\ell \bz_{\ell}$ for each mini-batch during training and for the entire training set during testing.
The parameters $\bbeta_\ell,\bgamma_\ell$ are learned along with $\bW_\ell$ via SGD.\footnote{Note that the DN bias $\bc_\ell$ from (\ref{eq:no_BN}) has been subsumed into $\bmu_\ell$ and $\bbeta_\ell$.}
The empirical fact that BN significantly improves both training speed and generalization performance of a DN in a wide range of tasks has made it ubiquitous, as evidenced by the 40,000 citations of the originating paper \cite{ioffe2015batch}.

Only limited progress has been made to date explaining BN, primarily in the context of optimization.
%
By studying how backpropagation updates the layer weights, \cite{cun1998efficient} observed that unnormalized feature maps are constrained to live on a low-dimensional subspace that limits the capacity of gradient-based learning.
By slightly altering the BN formula (\ref{eq:BN}), \cite{salimans2016weight} showed that renormalization via $\bsigma_{\ell}$ smooths the optimization landscape and enables faster training. 
Similarly, \cite{bjorck2018understanding, santurkar2018does,kohler2019exponential} confirmed BN's impact on the gradient distribution and optimization landscape through large-scale experiments. 
Using mean field theory, \cite{yang2019mean} characterized the gradient statistics of BN in fully connected feed-forward networks with random weights 
to show that it regularizes the gradients and improves the optimization landscape conditioning.

\begin{figure}[t!]
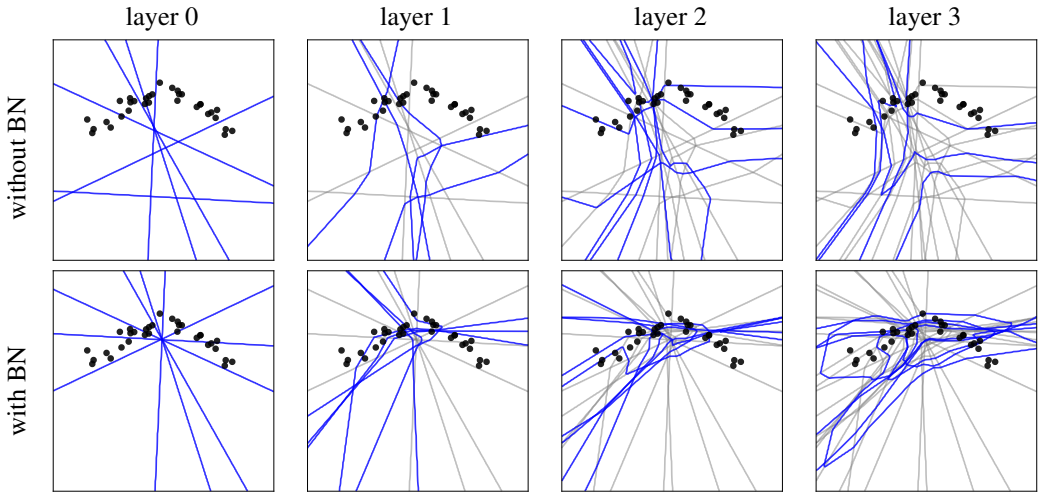

    \centering
    \begin{minipage}{0.02\linewidth}
        \rotatebox{90}{with BN \hspace{15mm} without BN \hspace{2mm}}
    \end{minipage}
    \foreach \i in {0,1,2,3}{
    \begin{minipage}{0.23\linewidth}
    \centering
    layer \i\\[-0.1em]
    \includegraphics[width=\linewidth]{images/2d_partition/2d_partition_before_\i_2.pdf}  \\[-0.5em]
    \includegraphics[width=\linewidth]{images/2d_partition/2d_partition_after_\i_2.pdf}  \end{minipage}
    }
    \vspace{-3mm}
    \caption{\small
        Visualization of the input-space spline partition (``linear regions'') of a four-layer DN with 2D input space, 6 units per layer, leaky-ReLU activation function, and random weights $\bW_\ell$. 
        The training data samples are denoted with black dots.
        In each plot, \color{blue}blue lines \color{black} correspond to folded hyperplanes introduced by the units of the corresponding layer, while \color{gray}gray lines \color{black} correspond to (folded) hyperplanes introduced by previous layers.
        Top row: Without BN (i.e., using (\ref{eq:no_BN})), the folded hyperplanes are spread throughout the input space, resulting in a spline partition that is agnostic to the data. 
        Bottom row: With BN (i.e., using (\ref{eq:BN})), the folded hyperplanes are drawn towards the data, resulting in an adaptive spline partition that -- even with random weights -- minimizes the distance between the partition boundaries and the data and thus increases the density of partition regions around the data.
        }
    \label{fig:2d_partition_bn}
    \vspace{-0.2cm}
\end{figure}

One should not take away from the above analyses that BN's only effect is to smooth the optimization loss surface or stabilize gradients.
If this were the case, then BN would be redundant in advanced architectures like residual \cite{li2017visualizing} and mollifying networks \cite{gulcehre2016mollifying} that have been proven to have improved optimization landscapes
\cite{li2018visualizing,riedi2022singular} and have been coupled with advanced optimization techniques like Adam \cite{kingma2014adam}.
Quite to the contrary, BN significantly improves the performance of even these advanced networks and techniques.

In this paper, we study BN theoretically from a different perspective that provides new insights into how it boosts DN optimization and inference performance.
Our perspective is function approximation; we exploit the fact that most of today's state-of-the-art DNs are {\em continuous piecewise affine (CPA) splines} that fit a predictor to the training data via affine mappings defined over a partition of the input space (the so-called ``linear regions''); see \cite{madmaxIEEE,balestriero2018spline,balestriero2019geometry} and 
Appendix~\ref{app:details} for more details.

The key finding of our study is that {\bf\em BN is an unsupervised learning technique that -- independent of the DN's weights or gradient-based learning -- adapts the geometry of a DN's spline partition to match the data}. 
Our three main theoretical contributions are as follows:
\begin{itemize}[noitemsep,topsep=0pt]

    \item BN adapts the layer/DN input space spline partition to minimize the total least squares (TLS) distance between the spline partition boundaries and the layer/DN inputs, thereby increasing the number of partition regions around the training data and enabling finer approximation (see Figure~\ref{fig:2d_partition_bn}).
    The BN parameter $\bmu_\ell$ translates the boundaries towards the data, while the parameter $\bsigma_\ell$ folds the boundaries towards the data (see Sections~\ref{sec:shallow} and \ref{sec:multi_layer}).

    \item BN's adaptation of the spline partition provides a ``smart initialization'' that boosts the performance of DN learning, because it adapts even a DN initialized with random weights $\bW_\ell$ to align the spline partition to the data (see Section~\ref{sec:smart}).
    
    
    \item 
    BN's statistics vary between mini-batches, which introduces a dropout-like random jitter perturbation to the partition boundaries and hence the decision boundary for classification problems.
    This jitter reduces overfitting and improves generalization by increasing the margin between the training samples and the decision boundary (see Section~\ref{sec:noise}). 
    
    
\end{itemize}

The proofs for our results are provided in the Appendices.

\section{Single-Layer Analysis of Batch Normalization}
\label{sec:shallow}

In this section, we investigate how BN impacts one individual DN layer. 
Our analysis leverages the identification that DN layers using continuous piecewise linear activation functions $a$ in (\ref{eq:no_BN}) and (\ref{eq:BN}) are {\em splines} that partition their input space into convex polytopal regions.  
We show that the BN parameter $\bmu_\ell$ translates the regions such that they concentrate around the training data.
%

\subsection{Batch normalization details} 


The BN parameters $\bbeta_{\ell}, \bgamma_{\ell}$, along with the DN weights $\bW_\ell$, are {\em learned directly} through the optimization of the DN's objective function (e.g., squared error or cross-entropy) evaluated at the training data samples $\mathcal{X}=\{ \bx_i, i=1,\dots,n\}$ and labels (if available).
Current practice performs the optimization using some flavor of stochastic gradient descent (SGD) over randomized mini-batches of training data samples $\mathcal{B}\subset\mathcal{X}$.
Our first new result is that we can set $\bgamma_{\ell}=\bOne$ and $\bbeta_{\ell}=\bZero$ with no or negligible impact on DN performance for current architectures, training datasets, and tasks.
First, we prove in Appendix~\ref{sec:betagamma} that we can set $\bgamma_\ell=\bOne$ both in theory and in practice.

\begin{prop}
\label{prop:redundant}
The BN parameter $\bgamma_\ell\neq \bZero$ does not impact the approximation expressivity of a DN, because its value can be absorbed into $\bW_{\ell+1},\bbeta_{\ell}$.
%
\end{prop}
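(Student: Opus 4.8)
The plan is to show that the BN-equipped layer map in (\ref{eq:BN}) produces, for any choice of $\bgamma_\ell$ (with nonzero entries), exactly the same feature map $\bz_{\ell+1}$ as a modified layer in which $\bgamma_\ell$ is replaced by $\bOne$ but $\bbeta_\ell$ and $\bW_{\ell+1}$ are suitably rescaled. First I would write out the scalar pre-activation inside $a(\cdot)$ in (\ref{eq:BN}),
\begin{equation}
s_{\ell+1,k} = \frac{\langle \bw_{\ell,k},\bz_\ell\rangle - \mu_{\ell,k}}{\sigma_{\ell,k}}\,\gamma_{\ell,k} + \beta_{\ell,k},
\end{equation}
and observe that replacing $(\gamma_{\ell,k},\beta_{\ell,k})$ by $(1,\beta_{\ell,k}/\gamma_{\ell,k})$ produces $s_{\ell+1,k}/\gamma_{\ell,k}$, i.e. the pre-activation is scaled by the positive-or-negative constant $\gamma_{\ell,k}$. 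Hence the new feature map entry is $\tilde z_{\ell+1,k} = a(s_{\ell+1,k}/\gamma_{\ell,k})$ rather than $a(s_{\ell+1,k})$.

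The crux is therefore to undo this per-coordinate scaling one layer downstream. Since $a$ is applied elementwise, $\bz_{\ell+1} = \mathrm{diag}(\bgamma_\ell)\,\tilde\bz_{\ell+1}$ would hold if $a$ were positively homogeneous; for a general continuous piecewise-linear $a$ (e.g. leaky-ReLU) this is true, because such $a$ satisfies $a(\gamma t) = $ (a scaled/sign-flipped version handled by the activation's own piecewise-linear structure). The clean way to sidestep any subtlety about the sign of $\gamma_{\ell,k}$ is to note that in layer $\ell+1$ the feature map enters only through $\bW_{\ell+1}\bz_{\ell+1}$ (see (\ref{eq:no_BN})), so it suffices to exhibit $\tilde\bW_{\ell+1}$ with $\tilde\bW_{\ell+1}\tilde\bz_{\ell+1} = \bW_{\ell+1}\bz_{\ell+1}$. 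When $a$ is positively homogeneous of degree one, $\tilde z_{\ell+1,k} = a(s_{\ell+1,k})/|\gamma_{\ell,k}|$ up to the sign bookkeeping, and taking $\tilde\bW_{\ell+1} = \bW_{\ell+1}\,\mathrm{diag}(\bgamma_\ell)$ reproduces the original product exactly; the BN statistics $\mu_{\ell+1},\sigma_{\ell+1}$ at the next layer are computed from $\bW_{\ell+1}\bz_{\ell+1}$ and hence are unchanged as well.

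I would then assemble these observations: define $\tilde\bbeta_\ell = \mathrm{diag}(\bgamma_\ell)^{-1}\bbeta_\ell$ and $\tilde\bW_{\ell+1} = \bW_{\ell+1}\mathrm{diag}(\bgamma_\ell)$, verify by the chain of equalities above that every feature map from layer $\ell+1$ onward is identical to that of the original DN, and conclude that the end-to-end input–output map is unchanged; since $\bgamma_\ell$ was arbitrary (nonzero), its expressive contribution is fully absorbed, which is the claim. The main obstacle I anticipate is purely in the careful handling of the homogeneity/sign issue for leaky-ReLU-type activations — making precise in what sense $a(\gamma t)$ factors through a diagonal rescaling when $\gamma$ may be negative — rather than in the algebra, which is a one-line rescaling; I would state the homogeneity assumption on $a$ explicitly (as is standard for CPA-spline DNs) and, if needed, absorb the sign into $\tilde\bW_{\ell+1}$ as indicated. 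A secondary point worth a sentence is the boundary case of a zero entry of $\bgamma_\ell$ (excluded by hypothesis, and anyway corresponding to a dead unit that can be dropped).
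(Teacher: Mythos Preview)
Your proposal is correct and follows essentially the same route as the paper: write out the layer-$\ell$ pre-activation, use the (positive) homogeneity $a(cu)=c\,a(u)$ of the piecewise-linear activation to pull $\gamma_{\ell,k}$ outside $a(\cdot)$, and then absorb it into $\bW_{\ell+1}$ while redefining $\beta_{\ell,k}\mapsto\beta_{\ell,k}/\gamma_{\ell,k}$. You are in fact more careful than the paper about the sign of $\gamma_{\ell,k}$ --- the paper simply asserts $a(cu)=c\,a(u)$ without qualification --- so your flagged ``main obstacle'' is a refinement rather than a gap.
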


Second, we demonstrate numerically in Appendix~\ref{sec:betagamma} that setting $\bbeta_{\ell}=\bZero$ has negligible impact on DN performance.
Henceforth, we will assume for our theoretical analysis that $\bgamma_{\ell}=\bOne, \bbeta_{\ell}=\bZero$ and will clarify for each experiment whether or not we enforce these constraints.


Let $\X_{\ell}$ denote the collection of feature maps $\bz_{\ell}$ at the input to layer $\ell$ produced by all inputs $\bx$ in the entire training data set $\X$, 
and similarly let $\B_\ell$ denote the collection of feature maps $\bz_\ell$ at the input to layer $\ell$ produced by all inputs $\bx$ in the mini-batch $\mathcal{B}$.

For each mini-batch $\B$ during training, the BN parameters $\bmu_{\ell}, \bsigma_{\ell}$ are {\em calculated directly} as the mean and standard deviation of the current mini-batch feature maps $\mathcal{B}_\ell$
\begin{align}
&\bmu_{\ell} 
\leftarrow
\frac{1}{|\B_\ell|}\sum_{\bz_\ell \in \B_\ell}  \bW_{\ell}\bz_\ell,
&\bsigma_{\ell} 
\leftarrow
\sqrt{\frac{1}{|\mathcal{B}_\ell|}\sum_{\bz_\ell \in \B_\ell}\big( \bW_{\ell}\bz_\ell-\bmu_{\ell} \big)^2},
\label{eq:bnupdate}
\end{align}
where the right-hand side square is taken element-wise.
After SGD learning is complete, a final fixed ``test time'' mean $\overline{\bmu}_{\ell}$ and standard deviation $\overline{\bsigma}_{\ell}$ are computed using the above formulae over all of the training data,\footnote{or more commonly as an exponential moving average of the training mini-batch values.} i.e., with $\B_\ell=\X_\ell$.
Note that {\em no label information enters into the calculation of} $\bmu_{\ell}, \bsigma_{\ell}$.

\subsection{Deep network spline partition (one layer)}
\label{sec:spline1}

We focus on the lionshare of modern DNs that employ continuous piecewise-linear activation functions $a$ in (\ref{eq:no_BN}) and (\ref{eq:BN}). 
To streamline our notation, but without loss of generality, we assume that $a$ consists of exactly two linear pieces that connect at the origin, such as the ubiquitous ReLU ($a(u)=\max(0,u)$), leaky-ReLU ($a(u)=\max(\alpha,u), \alpha>0$), and absolute value ($a(u)=\max(-u,u)$).
The extension to more general continuous piecewise-linear activation functions is straightforward \cite{balestriero2018spline,madmaxIEEE};
moreover, the extension to an infinite class of smooth activation functions (including the sigmoid gated learning unit, among others) follows from a simple probabilistic argument \cite{balestriero2018from}. 
Inserting pooling operators \cite{goodfellow2016deep} between layers does not impact our results (see  Appendix~\ref{app:details}).

A DN layer $\ell$ equipped with BN and employing such a piecewise-linear activation function is a {\em continuous piecewise-affine (CPA) spline operator} defined by a partition $\Omega_{\ell}$ of the layer's input space $\R^{D_\ell}$ into a collection of convex polytopal regions and a corresponding collection of affine transformations (one for each region) mapping layer inputs $\bz_{\ell}$ to layer outputs $\bz_{\ell+1}$.
Here we explain how the partition regions in $\Omega_{\ell}$ are formed; then in Section~\ref{sec:MUpart1} we begin our investigation of how these regions are transformed by BN.

Define the {\em pre-activation} of layer $\ell$ by $\bh_{\ell}$ such that the layer output $\bz_{\ell+1}=\ba(\bh_{\ell})$; from (\ref{eq:BN}) its $k^{\rm th}$ entry is given by
\begin{align}
    h_{\ell,k}=
    \frac{\left\langle \bw_{\ell,k},\bz_{\ell}\right\rangle-\mu_{\ell,k}}{\sigma_{\ell,k}}.
    \label{eq:h}
\end{align}
Note from (\ref{eq:bnupdate}) that $\sigma_{\ell,k}>0$ as long as $\|\bw_{\ell,k}\|_2^2>0$ and as long as not all inputs are orthogonal to $\bw_{\ell,k}$.
With typical CPA nonlinearities $\ba$, the $k^{\rm th}$ feature map output $z_{\ell+1,k}=a(h_{\ell,k})$ is linear in $h_{\ell,k}$ for all inputs with same sign. The separation between those two linear regimes is formed by the collection of layer inputs $\bz_{\ell}$ that produce pre-activations with $h_{\ell,k} = 0$, hence lie on the $D_\ell-1$ dimensional hyperplane
\begin{align}
    \H_{\ell,k}
    &
    =\left\{\bz_{\ell} \in \mathbb{R}^{D_{\ell}}: h_{\ell,k}=0\right\}
    =\left\{\bz_{\ell} \in \mathbb{R}^{D_{\ell}} : \left\langle \bw_{\ell,k},\bz_{\ell}\right\rangle=
    \mu_{\ell,k}\right\}.
    \label{eq:Hk}
\end{align}
Note that $\H_{\ell,k}$ is independent of the value of $\sigma_{\ell,k}$.
The boundary $\partial \Omega_{\ell}$ of the layer's input space partition $\Omega_{\ell}$ is obtained simply by combining all of the $\H_{\ell,k}$ into the {\em hyperplane arrangement} \cite{zaslavsky1975facing} 
\begin{align}
    \partial \Omega_{\ell} = \cup_{k=1}^{D_{\ell}}\H_{\ell,k}. \label{eq:boundary}
\end{align}
For additional results on the DN spline partition, see 
\cite{montufar2014number,raghu2017expressive,serra2018bounding,balestriero2019geometry}; the only property of interest here is that, for all inputs lying in the same region $\omega \in \Omega_{\ell}$, the layer mapping is a simple affine transformation $\bz_{\ell}=\sum_{\omega \in \Omega}(\bA_{\ell} (\omega)\bz_{\ell-1}+\bb_{\ell}(\omega))\Indic_{\{\bz_{\ell-1} \in \omega\}}$ (see  Appendix~\ref{app:details}).

\subsection[Batch normalization translates the spline partition boundaries towards the training data (Part 1)]{Batch normalization parameter $\bmu$ translates the spline partition boundaries towards the training data (Part 1)}
\label{sec:MUpart1}

With the above background in place, we now demonstrate that the BN parameter $\bmu_\ell$ impacts the spline partition $\Omega_\ell$ of the input space of DN layer $\ell$ by {\em translating its boundaries $\partial \Omega_\ell$ towards the current mini-batch training data $\mathcal{X}_\ell$.}

We begin with some definitions.
The Euclidean distance from a point $\bv$ in layer $\ell$'s input space to the layer's $k^{\rm th}$ hyperplane $\H_{\ell,k}$  is easily calculated as (e.g., Eq.~1 in \cite{AMALDI201322})
\begin{align}
    d(\bv,\H_{\ell,k}) = \frac{\left | \langle \bw_{\ell,k},\bv\rangle - \mu_{\ell,k}\right | }{\|\bw_{\ell,k}\|_2}
    \label{eq:distance}
\end{align}
as long as $\|\bw_{\ell,k}\| >0$.
Then, the average squared distance between $\H_{\ell,k}$ and a collection of points $\V$ in layer $\ell$'s input space is given by
\begin{align}
\L_k(\mu_{\ell,k},\V) = \frac{1}{|\V|}\sum_{\bv \in \V}
d\left(\bv,\H_{\ell,k}\right)^2= \frac{\sigma_{\ell,k}^2}{\|\bw_{\ell,k}\|_2^2},
\label{eq:optimization-k}
\end{align}
where we have made explicit the dependency of $\L_k$ on $\mu_{\ell,k}$ through  $\H_{\ell,k}$.
Going one step further, the {\em total least squares (TLS) distance} \citep{samuelson1942note,golub1980analysis}
between a collection of points $\V$ in layer $\ell$'s input space and layer $\ell$'s partition $\Omega_\ell$ is given by 
\begin{align}
\L(\bmu_{\ell},\V) = \sum_{k=1}^{D_{\ell}}
\L_k(\mu_{\ell,k},\V).
\label{eq:optimization}
\end{align}

In Appendix \ref{proof:thmonelayer}, we prove that
the BN parameter $\bmu_\ell$ as computed in (\ref{eq:bnupdate}) is the unique solution of the strictly convex optimization problem of minimizing the average TLS distance (\ref{eq:optimization}) 
between the training data and layer $\ell$'s hyperplanes $\H_{\ell,k}$ and hence spline partition region boundaries $\partial\Omega_\ell$.

\begin{thm}
\label{thm:onelayer}
Consider layer $\ell$ of a DN as described in (\ref{eq:BN}) and a mini-batch of layer inputs $\B_\ell\subset \X_\ell$. Then $\bmu_{\ell}$ in (\ref{eq:bnupdate}) is the unique minimizer of $\L(\bmu_{\ell},\B_\ell)$, and $\overline{\bmu}_{\ell}$ is the unique minimizer of 
$\L(\bmu_{\ell},\X_\ell)$.
\end{thm}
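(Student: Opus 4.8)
The plan is to exploit the fact that the objective (\ref{eq:optimization}) \emph{separates across the coordinate index} $k$: since $\L(\bmu_{\ell},\V)=\sum_{k=1}^{D_{\ell}}\L_k(\mu_{\ell,k},\V)$ and the $k$-th summand depends only on the single scalar $\mu_{\ell,k}$, minimizing $\L(\cdot,\V)$ over $\bmu_{\ell}\in\R^{D_{\ell}}$ reduces to minimizing each one-dimensional map $\mu_{\ell,k}\mapsto\L_k(\mu_{\ell,k},\V)$ independently. So the whole argument collapses to a scalar least-squares computation, performed once with $\V=\B_\ell$ and once with $\V=\X_\ell$.

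First I would write $\L_k$ out explicitly using the point--hyperplane distance (\ref{eq:distance}):
\[
\L_k(\mu_{\ell,k},\V)=\frac{1}{|\V|\,\|\bw_{\ell,k}\|_2^2}\sum_{\bv\in\V}\bigl(\langle\bw_{\ell,k},\bv\rangle-\mu_{\ell,k}\bigr)^2 .
\]
This is a quadratic in $\mu_{\ell,k}$ with leading coefficient $1/\|\bw_{\ell,k}\|_2^2>0$; positivity is exactly the standing assumption recorded below (\ref{eq:h}) (i.e., $\|\bw_{\ell,k}\|_2>0$, which also makes (\ref{eq:distance}) well defined). Hence $\L_k(\cdot,\V)$ is strictly convex and has a unique minimizer, found from the first-order condition
\[
\frac{\partial\L_k}{\partial\mu_{\ell,k}}=\frac{-2}{|\V|\,\|\bw_{\ell,k}\|_2^2}\sum_{\bv\in\V}\bigl(\langle\bw_{\ell,k},\bv\rangle-\mu_{\ell,k}\bigr)=0
\quad\Longleftrightarrow\quad
\mu_{\ell,k}=\frac{1}{|\V|}\sum_{\bv\in\V}\langle\bw_{\ell,k},\bv\rangle .
\]

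Next I would recognize the right-hand side as the $k$-th coordinate of $\frac{1}{|\V|}\sum_{\bv\in\V}\bW_{\ell}\bv$, since $\bw_{\ell,k}$ is the $k$-th row of $\bW_{\ell}$. Assembling the $D_\ell$ coordinates shows the unique minimizer of $\L(\cdot,\V)$ is $\frac{1}{|\V|}\sum_{\bv\in\V}\bW_{\ell}\bv$, and strict convexity of the full objective is immediate since its Hessian in $\bmu_{\ell}$ is diagonal with strictly positive entries $2/(|\V|\,\|\bw_{\ell,k}\|_2^2)$. Specializing $\V=\B_\ell$ gives precisely the BN update for $\bmu_{\ell}$ in (\ref{eq:bnupdate}), and $\V=\X_\ell$ gives the test-time average $\overline{\bmu}_{\ell}$, establishing both claims.

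There is no genuinely hard step: the theorem is the familiar fact that the arithmetic mean minimizes a sum of squared residuals, transported through the linear functional $\bz\mapsto\langle\bw_{\ell,k},\bz\rangle$ and then rescaled by the fixed positive constant $1/\|\bw_{\ell,k}\|_2^2$, which moves neither the minimizer nor the convexity. The only points needing care are bookkeeping: confirming $\|\bw_{\ell,k}\|_2>0$ and $|\V|\ge 1$ so that (\ref{eq:distance}) applies and the quadratic is nondegenerate, and noting that although the factor $1/\|\bw_{\ell,k}\|_2^2$ makes $\L_k$ differ from a plain residual sum, it is constant in $\mu_{\ell,k}$ and hence irrelevant to the optimization.
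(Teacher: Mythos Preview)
Your proposal is correct and follows essentially the same route as the paper's proof: both exploit the separability of $\L$ across the coordinates $k$, reduce to a scalar quadratic in $\mu_{\ell,k}$, and identify the unique minimizer via the first-order condition (with positivity of the second derivative/leading coefficient confirming it is a minimum). If anything, your version is a bit more explicit about the bookkeeping (the role of $\|\bw_{\ell,k}\|_2>0$, strict convexity, and the specialization to $\V=\B_\ell$ versus $\V=\X_\ell$) than the paper's proof.
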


In words, at each layer $\ell$ of a DN, BN explicitly adapts the input-space partition $\Omega_\ell$ by using $\bmu_\ell$ to translate its boundaries $\H_{\ell,1},\H_{\ell,2},\dots$ to minimize the TLS distance to the training data.
Figure~\ref{fig:2d_partition_bn} demonstrates empirically in two dimensions how this translation focuses the layer's spline partition on the data. 
Moreover, this translation takes on a very special form.
We prove in Appendix~\ref{proof:centroid} 
that BN transforms the spline partition boundary $\partial \Omega_{\ell}$
into a {\em central hyperplane arrangement} \cite{stanley2004introduction}.

It is worth noting that the above results do not involve any data label information, and so -- at least as far as $\bmu$ and $\bsigma$ are concerned -- BN can be interpreted as an {\em unsupervised} learning technique.

\section{Multiple Layer Analysis of Batch Normalization}
\label{sec:multi_layer}

We now extend the single-layer analysis of the previous section to the composition of two or more DN layers. 
We begin by showing that the layers' $\bmu_\ell$ continue to translate the hyperplanes that comprise the spline partition boundary such that they concentrate around the training data.
We then show that the layers' $\bsigma_\ell$ fold those same hyperplanes with the same goal.

\subsection{Deep  network  spline  partition  (multiple layers)}
\label{sec:spline2}

Taking advantage of the fact that a composition of multiple CPA splines is itself a CPA spline, we now extend the results from Section~\ref{sec:spline1} regarding one DN layer to the composition of layers $1,\dots,\ell$, $\ell>1$ that maps the DN input $\bx$ to the feature map $\bz_{\ell+1}$.\footnote{Our analysis applies to any composition of $\ell$ DN layers; we focus on the first $\ell$ layers only for concreteness.} The denote partition of this mapping by $\Omega_{|\ell}$,
where we introduce the shorthand  $|\ell$ to denote the mapping through layers $1,\dots,\ell$.
Appendix~\ref{app:details} provides closed-form formulas for the per-region affine mappings.

As in Section~\ref{sec:spline1}, we are primarily interested in the boundary $\partial\Omega_{|\ell}$ of the spline partition $\Omega_{|\ell}$. 
Recall that the boundary of the spline partition of a single layer was easily found in (\ref{eq:h})--(\ref{eq:boundary}) as the rearrangement of the hyperplanes formed where the layer's pre-activation equals zero. With multiple layers, the situation is almost the same as in (\ref{eq:Hk})
\begin{align}
    \partial \Omega_{|\ell} = \bigcup_{j=1}^{\ell}
     \bigcup_{k=1}^{D_{j}} \left\{\bx \in \mathbb{R}^{D_1}: h_{j,k}=0\right\};
     \label{eq:zero_set}
\end{align}
further details are provided in Appendix~\ref{app:details}. 
The salient result of interest to us is that $\partial \Omega_{|\ell}$ is constructed from the hyperplanes $\H_{j,k}$ pulled back through the preceding layer(s). This process {\em folds} those hyperplanes (toy depiction given in Figure~\ref{fig:Thm3}) based on the preceding layers' partitions such that the folded $\H_{j,k}$ consist of a collection of {\em facets} 
\begin{align}
\F_{j,k,\omega} 
=\{\bx \in \omega: h_{j,k}=0\}=
\{\bx \in \omega: \langle \bw_{j,k},\bz_j\rangle=\mu_{j,k}\}, \quad \omega \in \Omega_{|j},\label{eq:facets}
\end{align}
which simplifies (\ref{eq:zero_set}) to
$
    \partial \Omega_{|\ell} = \bigcup_{j=1}^{\ell}
     \bigcup_{k=1}^{D_{j}} \F_{j,k}
$, where $\F_{\ell,k}\triangleq \bigcup_{\omega \in \Omega_{|j}}\F_{j,k,\omega}$.

\subsection[Batch normalization translates the spline partition boundaries towards the training data (Part 2)]{Batch normalization parameter $\bmu$ translates the spline partition boundaries towards the training data (Part 2)}

In the one-layer case, we saw in Theorem~\ref{thm:onelayer} that BN independently translates each DN layer's hyperplanes $\H_{\ell,k}$ towards the training data to minimize the TLS distance. 
In the multilayer case, as we have just seen, those hyperplanes $\H_{\ell,k}$ become 
folded hyperplanes $\F_{\ell,k}$ (recall (\ref{eq:zero_set})).

We now demonstrate that the BN parameter $\bmu_\ell$ translates the folded hyperplanes $\F_{\ell,k}$ -- and thus adapts $\Omega_{|\ell}$ -- towards the input-space training data $\X$.
To this end, denote the squared distance from a point $\bx$ in the DN input space to the folded hyperplane $\F_{\ell,k}$ by
\begin{align}
    d(\bx,\F_{\ell,k}) = \min_{\bx' \in \F_{\ell,k}} \| \bx-\bx'\|^2.
    \label{eq:distance_F}
\end{align}

\begin{thm}
\label{thm:multilayer}
Consider layer $\ell>1$ of a layer as described in (\ref{eq:BN}) with fixed weight matrices and BN parameters from layers 1 through $\ell-1$ and fixed weights $\bW_\ell$.
Let $\bmu_\ell$ and $\bmu'_\ell$ yield the hyperplanes $\H_{\ell,k}$ and $\H'_{\ell,k}$ and their corresponding folded hyperplanes $\F_{\ell,k}$ and $\F'_{\ell,k}$.
Then we have that
$
    d(\bz_{\ell}(\bx),\H_{\ell,k}) < d(\bz_{\ell}(\bx),\H_{\ell,k}') ~ \implies ~ d(\bx,\F_{\ell,k}) < d(\bx,\F_{\ell,k}')$.
\end{thm}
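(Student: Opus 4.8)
The plan is to exploit the fact that, for fixed layers $1,\dots,\ell-1$, the map $\bx \mapsto \bz_\ell(\bx)$ is a fixed CPA spline, and the folded hyperplane $\F_{\ell,k}$ is exactly the preimage $\{\bx : \bz_\ell(\bx) \in \H_{\ell,k}\}$, while $\F'_{\ell,k}$ is the preimage of $\H'_{\ell,k}$. The key structural observation is that the distance in input space from $\bx$ to $\F_{\ell,k}$ is controlled, region by region, by the distance in layer-$\ell$ input space from $\bz_\ell(\bx)$ to $\H_{\ell,k}$, rescaled by the local affine map. Concretely, on each region $\omega \in \Omega_{|\ell}$ the map is $\bx \mapsto \bA_{|\ell}(\omega)\bx + \bb_{|\ell}(\omega)$, and intersecting $\H_{\ell,k}$ with the image of $\omega$ pulls back to a facet $\F_{\ell,k,\omega}$. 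Since $\H_{\ell,k}$ and $\H'_{\ell,k}$ differ only by a translation along the common normal direction $\bw_{\ell,k}$ (recall (\ref{eq:Hk}): both have normal $\bw_{\ell,k}$, only the offset $\mu_{\ell,k}$ versus $\mu'_{\ell,k}$ changes), the pulled-back facets $\F_{\ell,k,\omega}$ and $\F'_{\ell,k,\omega}$ also differ in a controlled way within each region.

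The main steps, in order, are as follows. First, I would fix $\bx$ and let $\omega^\star$ be the region of $\Omega_{|\ell}$ whose closure contains the input-space point realizing $d(\bx,\F_{\ell,k})$, and use the local affine formula to write $d(\bx,\F_{\ell,k}) = d(\bx,\F_{\ell,k,\omega^\star})$; similarly for the primed quantities. Second, I would relate $d(\bx,\F_{\ell,k,\omega})$ to $d(\bz_\ell(\bx),\H_{\ell,k})$ via the singular values of the local Jacobian $\bA_\ell(\omega)$ of the $\ell$-th layer map restricted to region $\omega$: the point in $\H_{\ell,k}$ nearest to $\bz_\ell(\bx)$ pulls back (within the region) to a candidate point whose input-space distance is at most $\sigma_{\max}(\bA_\ell(\omega))^{-1}$ — wait, more carefully, one gets two-sided bounds $\sigma_{\max}^{-1} \|\bz_\ell(\bx)-\bz_\ell(\bx')\| \le \|\bx-\bx'\| \le \sigma_{\min}^{-1}\|\bz_\ell(\bx)-\bz_\ell(\bx')\|$ for $\bx,\bx'$ in the same region. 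Third, and this is the crucial point, since $\H_{\ell,k}$ and $\H'_{\ell,k}$ share the same normal $\bw_{\ell,k}$, the nearest-point computations in layer-$\ell$ space both project along $\bw_{\ell,k}$, so the \emph{same} local affine geometry (same region, same Jacobian, up to which region the foot of the perpendicular lands in) governs both; the hypothesis $d(\bz_\ell(\bx),\H_{\ell,k}) < d(\bz_\ell(\bx),\H'_{\ell,k})$ then transfers through the common rescaling to give $d(\bx,\F_{\ell,k}) < d(\bx,\F'_{\ell,k})$. Fourth, I would assemble these region-wise estimates: the global input-space distance is the minimum over regions of the region-wise distances, and the inequality is preserved under taking minima provided the region achieving the minimum is consistent between primed and unprimed — which I would argue by noting that translating a single hyperplane along its normal moves its preimage facets continuously and monotonically, so the identity of the ``closest region'' does not flip the inequality.

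The main obstacle I anticipate is step four: making rigorous the claim that the inequality survives the passage from a single region to the global minimum over all regions of $\Omega_{|\ell}$. The folded hyperplane $\F_{\ell,k}$ is a union of facets spread across many regions, and the region where $\bx$'s nearest point on $\F_{\ell,k}$ lands may differ from the region where its nearest point on $\F'_{\ell,k}$ lands, so one cannot naively compare within a single fixed region. The resolution I would pursue is to show that the relevant quantity is monotone in the scalar $\langle \bw_{\ell,k}, \bz_\ell(\bx)\rangle - \mu_{\ell,k}$ — i.e. the signed pre-activation — because both the folding operation and the perpendicular-distance computation depend on $\mu_{\ell,k}$ only through this signed offset, and $d(\bx,\F_{\ell,k})$ is a monotone increasing function of $|\langle \bw_{\ell,k},\bz_\ell(\bx)\rangle - \mu_{\ell,k}|$ once the region structure of the earlier layers (which is fixed, since layers $1,\dots,\ell-1$ and $\bW_\ell$ are fixed) is held constant. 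Establishing this monotonicity carefully — perhaps by a continuity/homotopy argument sliding $\mu_{\ell,k}$ from $\mu'_{\ell,k}$ to $\mu_{\ell,k}$ and tracking the input-space distance — is where the real work lies; the rest is bookkeeping with local affine maps.
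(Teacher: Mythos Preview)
Your approach differs substantially from the paper's. The paper uses a \emph{parametric path} argument rather than region-wise Jacobian bounds: let $\bx^\star$ realize $d(\bx,\F'_{\ell,k})$, parametrize the straight segment $l(\theta)=(1-\theta)\bx+\theta\bx^\star$ in input space, and push it through the fixed layers $1,\dots,\ell-1$ to obtain a continuous piecewise-affine curve $\theta\mapsto\bz_\ell(l(\theta))$ in layer-$\ell$ input space. Since $l(1)=\bx^\star\in\F'_{\ell,k}$ forces $\bz_\ell(l(1))\in\H'_{\ell,k}$, and since by hypothesis $\H_{\ell,k}$ is closer to $\bz_\ell(l(0))=\bz_\ell(\bx)$ than $\H'_{\ell,k}$ is, the curve is argued to meet $\H_{\ell,k}$ at some $\theta'<1$; the exact preimage relation $\F_{\ell,k}=\{\bx:\bz_\ell(\bx)\in\H_{\ell,k}\}$ then gives $l(\theta')\in\F_{\ell,k}$, whence $d(\bx,\F_{\ell,k})\le\|l(\theta')-\bx\|=\theta'\,d(\bx,\F'_{\ell,k})<d(\bx,\F'_{\ell,k})$. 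No minimization over regions of $\Omega_{|\ell}$ is ever taken.

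The gap you yourself flag in step~4 is real and is exactly what the paper's path argument circumvents. Your two-sided singular-value estimates are too coarse to transfer a strict inequality (the upper bound on one side can exceed the lower bound on the other), and your fix in step~3 --- same normal $\bw_{\ell,k}$, hence same local rescaling --- only applies when both perpendicular feet land in the \emph{same} region of $\Omega_{|\ell}$, which you cannot guarantee. The paper never compares distances region by region; it exhibits a single explicit witness $l(\theta')\in\F_{\ell,k}$ strictly closer to $\bx$ than $\bx^\star$. Your proposed monotonicity-in-the-signed-preactivation idea is morally what underlies the path argument (the scalar $\langle\bw_{\ell,k},\bz_\ell(l(\theta))\rangle$ varies continuously in $\theta$ and must pass through $\mu_{\ell,k}$ before reaching $\mu'_{\ell,k}$), but the paper executes it as a one-line intermediate-value step on a single curve rather than as a homotopy tracked across the entire partition.
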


In words, translating a hyperplane closer to $\bz_\ell$ in layer $\ell$'s input space moves the corresponding folded hyperplane closer to the DN input $\bx$ that produced $\bz_\ell$, which is of particular interest for inputs $\bx_i$ from the training data $\X$.
We also have the following corollary.

\begin{cor}
\label{cor:multilayer}
Consider layer $\ell>1$ of a trained BN-equipped DN as described in Theorem~\ref{thm:multilayer}.
Then $\bz_\ell(\bx)$ lies on hyperplane $\H_{\ell,k}$ for some $k$ if and only $\bx$ lies on the corresponding folded hyperplane $\F_{\ell,k}$ in the DN input space; that is,
$
d(\bz_{\ell-1}(\bx),\H_{\ell,k}) = 0 ~ \iff ~ d(\bx,\F_{\ell,k}) = 0$.

\end{cor}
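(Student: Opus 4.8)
The plan is to derive the corollary as the limiting/boundary case of Theorem~\ref{thm:multilayer}, using the fact that both the point-to-hyperplane distance and the point-to-folded-hyperplane distance are nonnegative and vanish exactly when the point lies on the relevant set. First I would establish the forward direction: suppose $d(\bz_{\ell-1}(\bx),\H_{\ell,k})=0$, i.e.\ $\bz_{\ell-1}(\bx)\in\H_{\ell,k}$ (using $\bz_{\ell-1}$ as the layer-$\ell$ input here, consistent with the indexing in the corollary's displayed equation). By the explicit distance formula \eqref{eq:distance}, this means $\langle \bw_{\ell,k},\bz_{\ell-1}(\bx)\rangle=\mu_{\ell,k}$. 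By the definition of the facets \eqref{eq:facets}, the point $\bx$ itself lies in the set $\{\bx\in\omega:\langle\bw_{\ell,k},\bz_\ell\rangle=\mu_{\ell,k}\}$ for the region $\omega\in\Omega_{|\ell-1}$ containing $\bx$, hence $\bx\in\F_{\ell,k}$, so $d(\bx,\F_{\ell,k})=0$ by \eqref{eq:distance_F}.

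Next I would handle the reverse direction: suppose $d(\bx,\F_{\ell,k})=0$. Since $\F_{\ell,k}$ is a finite union of closed facets (each being the intersection of a closed polytope $\overline{\omega}$ with an affine hyperplane), $\F_{\ell,k}$ is closed, so $d(\bx,\F_{\ell,k})=0$ forces $\bx\in\F_{\ell,k}$, i.e.\ $\bx$ lies in some facet $\F_{\ell,k,\omega}$. By \eqref{eq:facets} this gives $\langle\bw_{\ell,k},\bz_\ell(\bx)\rangle=\mu_{\ell,k}$ — but one must be slightly careful that $\bz_\ell$ here refers to the layer-$\ell$ input feature map evaluated at $\bx$ via the composition of layers $1,\dots,\ell-1$; on the closed region $\omega$ this composition is the (single) affine map extended to $\overline\omega$ by continuity, so the evaluation is well-defined at boundary points. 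Then by \eqref{eq:distance} again, $d(\bz_{\ell-1}(\bx),\H_{\ell,k})=0$.

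Alternatively — and this is probably the cleaner write-up — I would invoke Theorem~\ref{thm:multilayer} directly via a contrapositive/squeeze argument: the theorem says the map $d(\bz_{\ell-1}(\bx),\H_{\ell,k})\mapsto d(\bx,\F_{\ell,k})$ is strictly order-preserving as one varies $\mu_{\ell,k}$, and in particular the comparison with a value of $\mu'_{\ell,k}$ chosen so that $\bz_{\ell-1}(\bx)$ lies exactly on $\H'_{\ell,k}$ pins down both distances to zero simultaneously; but since the theorem as stated is only about strict inequalities, I would need to also note that zero distance cannot be strictly dominated by a nonnegative quantity, which closes the equivalence. The main obstacle I anticipate is the well-definedness of $\bz_\ell(\bx)$ (equivalently $\bz_{\ell-1}(\bx)$) at points on the partition boundary, where the preceding layers' affine map is not uniquely selected by the region-indicator formula $\sum_\omega(\bA_\ell(\omega)\bz_{\ell-1}+\bb_\ell(\omega))\Indic_{\{\bz_{\ell-1}\in\omega\}}$; I would resolve this by appealing to the continuity of the CPA map (so the value on the closure of each region agrees on overlaps), which is exactly the ``continuous'' in continuous piecewise-affine and is guaranteed by the construction in Appendix~\ref{app:details}. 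Everything else is a direct unwinding of definitions \eqref{eq:distance}, \eqref{eq:facets}, and \eqref{eq:distance_F}.
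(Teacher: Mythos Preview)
Your proposal is correct and follows essentially the same route as the paper. The paper's own argument is a single sentence tucked into the end of the proof of Theorem~\ref{thm:multilayer}: it simply observes that $\bz_{\ell-1}(\bx)\in\H_{\ell,k}$ means $\bx$ lies in the zero-set of unit $(\ell,k)$, which is by definition the folded hyperplane $\F_{\ell,k}$. Your definitional unwinding via \eqref{eq:distance}, \eqref{eq:facets}, \eqref{eq:distance_F} is exactly this, just written out more carefully (your attention to closedness of $\F_{\ell,k}$ and to well-definedness of the CPA map on region boundaries is more than the paper provides, but welcome).
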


Figure~\ref{fig:evolution_boundary}(b) illustrates empirically how $\bmu_2$ translates the folded hyperplanes $\F_{2,k}$ realized by the second layer of a toy DN. The impact of the BN parameters $\bsigma$, is also crucial albeit not as crucial as $\bmu$. For completeness, we study how this parameter translates and folds adjacent facets composing the folded hyperplanes $\F_{\ell,k}$ in Figure~\ref{fig:evolution_boundary} and Appendix \ref{sec:sigma}.


\begin{figure}[t!]
    \centering
    \begin{minipage}{0.52\linewidth}
    \begin{minipage}{0.32\linewidth}
    \centering
    \small
    $\H_{2,k}$ varying $\mu_{2,k}$\\
    \includegraphics[width=\linewidth]{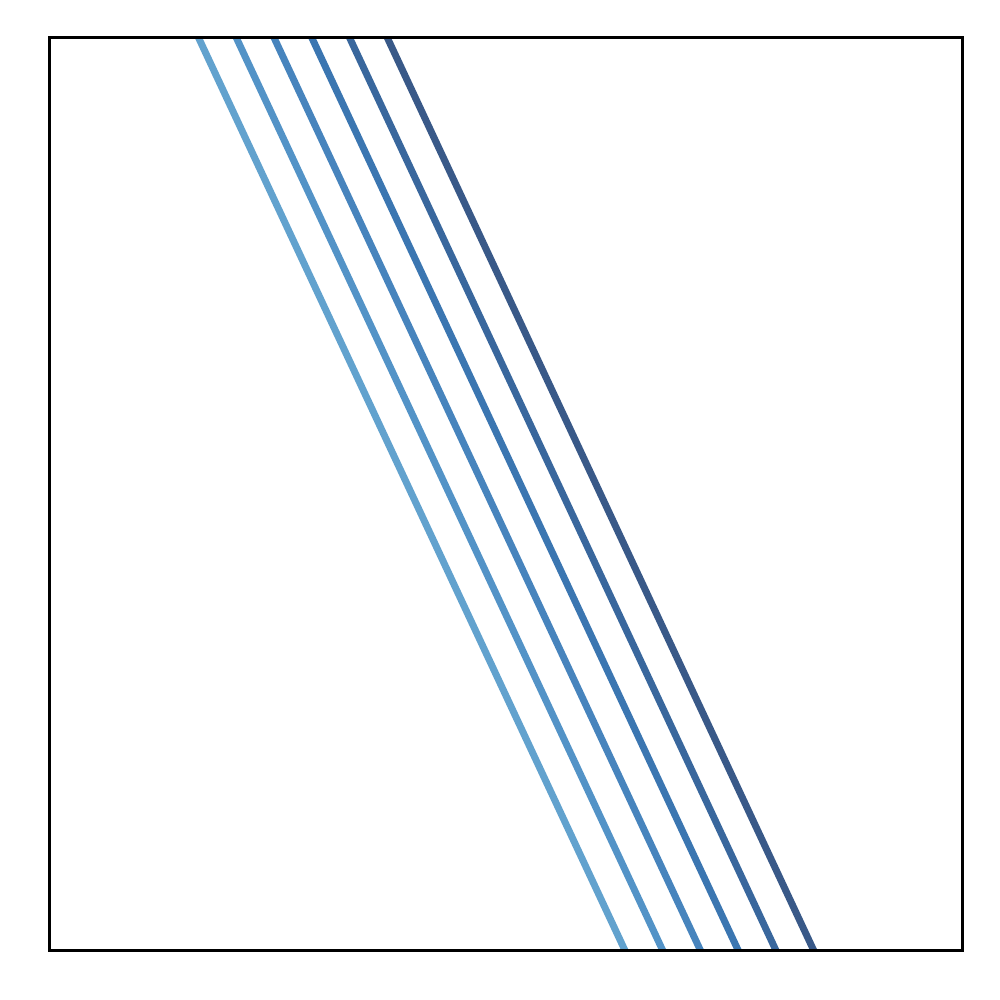}
    \\[-0.5em](a)
    \end{minipage}
    \begin{minipage}{0.32\linewidth}
    \centering
    \small
    $\F_{2,k}$ varying $\mu_{2,k}$\\
    \includegraphics[width=\linewidth]{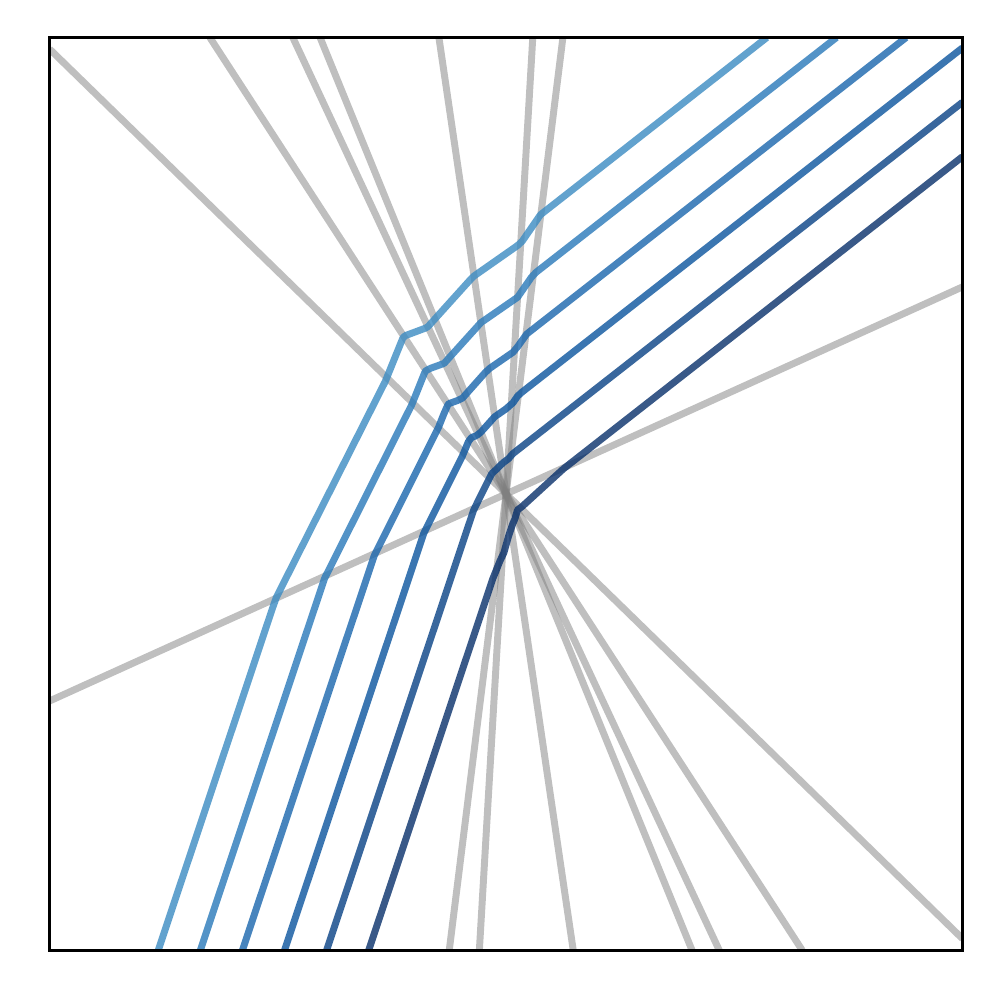}
    \\[-0.5em](b)
    \end{minipage}
    \begin{minipage}{0.32\linewidth}
    \centering
    \small
    $\F_{2,k}$ varying $\bsigma_{1}$\\
    \includegraphics[width=\linewidth]{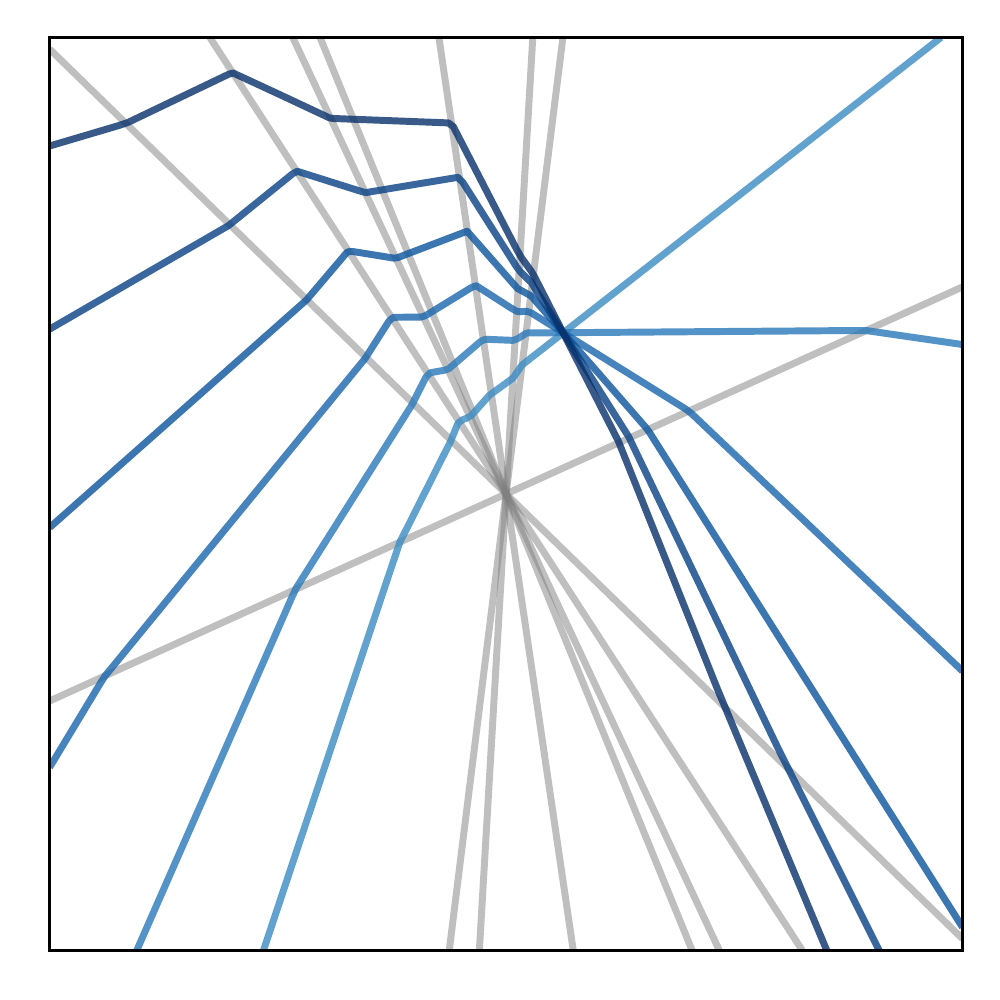}
    \\[-0.5em](c)
    \end{minipage}
    \end{minipage}
    \begin{minipage}{0.47\linewidth}
        \caption{\small 
        Translation and folding effected by the BN parameters $\bmu_\ell,\bsigma_\ell$ on a two-layer DN with 8 units per layer, 2D input space, and random weights. 
        (a)~Varying $\mu_{2,k}$ translates the layer $2$, unit $k$ hyperplane $\H_{2,k}$ (recall (\ref{eq:Hk})) viewed in a 2D slice of its own input space.
        (b)~Translation of that same hyperplane but now viewed in the DN input space, where it becomes the folded hyperplane $\F_{2,k}$ (recall (\ref{eq:facets})).
        (c)~Fixing $\mu_{2,k}$ and varying $\bsigma_1$ folds the next layer(s) hyperplanes, viewed in the DN input space.
        }
    \label{fig:evolution_boundary}
    \end{minipage}
\end{figure}

\subsection{Batch-Normalization Increases the Density of Partition Regions Around the Training Data}
\label{sec:density}

We now extend the toy DN numerical experiments reported in Figures~\ref{fig:2d_partition_bn} to more realistic DN architectures and higher-dimensional settings.
We focus on three settings all involving random weights $\bW_\ell$: 
(i) zero bias $\bc_\ell=0$ in (\ref{eq:no_BN}),
(ii) random bias $\bc_\ell$ in (\ref{eq:no_BN}),
and 
(iii) BN in (\ref{eq:BN}).


{\bf 2D Toy Dataset.}~We continue visualizing the effect of BN in a 2D input space by reproducing the experiment of Figure~\ref{fig:2d_partition_bn} but with a more realistic DN with $11$ layers of width 1024 and training data consisting of 50 samples from a star shape in 2D (see the leftmost plots in Figure~\ref{fig:backprop}). For each of the above three settings, Figure~\ref{fig:backprop} visualizes in the 2D input space the {\em concentration} of the contribution to the partition boundary (recall (\ref{eq:facets})) from three specific layers ($j=1,7,11$). The concentration at each point in the input space corresponds to the number of folded hyperplane facets $\F_{j,k,\omega}$ passing through an $\epsilon$-ball centered on that point.
This concentration calculation can be performed efficiently via the technique of \cite{harman2010decompositional} that is analyzed in \cite{voelker2017efficiently}.
Three conclusions are evident from the figure:
(i) BN clearly focuses the spline partition on the data;
(ii) random bias creates a random partition that is diffused over the entire input space;
(iii) zero bias creates a (more or less) central hyperplane arrangement that does not focus on the data.

\begin{figure}[bt!]
    \begin{minipage}{0.02\linewidth}
    \rotatebox{90}{\small zero bias\;\;\;\;\;\;\;random\;\;\;\;\;\;\;\;\;\;BN\;\;\;\;\;\;\;}
    \end{minipage}
    \begin{minipage}{0.5\linewidth}
    \begin{minipage}{0.23\linewidth}
    \centering
    \small
    data
    \end{minipage}
    \foreach \l in {3,7,11}{
        \begin{minipage}{0.23\linewidth}
        \centering
        $\F_{\l,k},\forall k$
        \end{minipage}
    }\\
    \foreach \mode in {bn,random,zero}{
        \begin{minipage}{0.234\linewidth}
        \includegraphics[width=\linewidth]{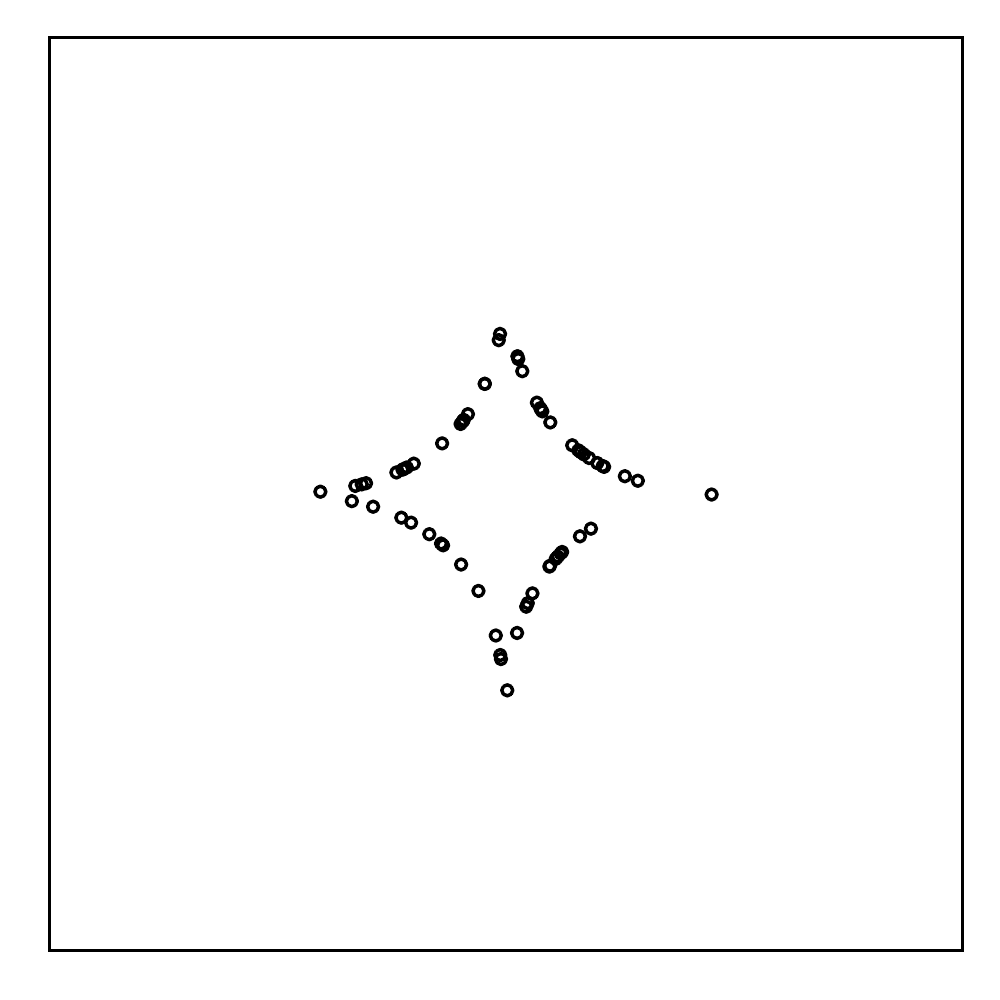}
        \end{minipage}
        \foreach \l in {3,7,11}{
            \begin{minipage}{0.234\linewidth}
            \includegraphics[width=\linewidth]{images/backprojection/\mode_\l_1024_5.pdf}
            \end{minipage}
        }\\[-0.3em]
    }
    \end{minipage}
    \begin{minipage}{0.46\linewidth}
    \caption{\small
Visualization in the 2D input space of the contribution $\partial \Omega^{1}_{j}$ to the spline partition boundary $\Omega^{1}$ from layers $j=1, 7, 11$ of an 11-layer DN of width 1024.
The training data set $\X$ consists of 50 samples from a star-shaped distribution (left plots).
We plot the concentration of the folded hyperplane facets in an $\epsilon$-ball around each 2D input space point for the three initialization settings described in the text:  zero bias, random bias, and BN.
Darker color indicates more partition boundaries crossing through that location.
Each plot is normalized by the maximum concentration attained, the value of which is noted.
  }
\label{fig:backprop}
\end{minipage}
\end{figure}

{\bf CIFAR Images.}~We now consider a high-dimensional input space (CIFAR images) with a Resnet9 architecture (random weights). Since we cannot visualize the spline partition in the 3072D input space, we present a summary of the same concentration analysis carried out in Figure~\ref{fig:backprop} for the partition boundary by measuring the number of folded hyperplanes passing through an $\epsilon$-ball centered around $100$ randomly sampled training images (BN statistics are obtained from the full training set). We report these observation in 
Figure~\ref{fig:neighbourregions} (left) and again clearly see that -- in contrast to zero and random bias initialization -- BN focuses the spline partition to lie close to the data points.
To quantify the concentration of regions away from the training data, we repeat the same measurement but for 100 iid random Gaussian images that are scaled to the same mean and variance as the CIFAR images. For this case, we observe in Figure~\ref{fig:neighbourregions} (right) that BN only focused the partition around the CIFAR images and not around the random ones. This is in concurrence with the low-dimensional experiment in Figure~\ref{fig:backprop}. 




\begin{figure}[t]
\centering
    \begin{minipage}{0.53\linewidth}
    \centering
        \hspace{4mm}{\small around CIFAR10 images} \hspace{1mm} {\small around random images}\\[-0.1em]
        \includegraphics[width=\linewidth]{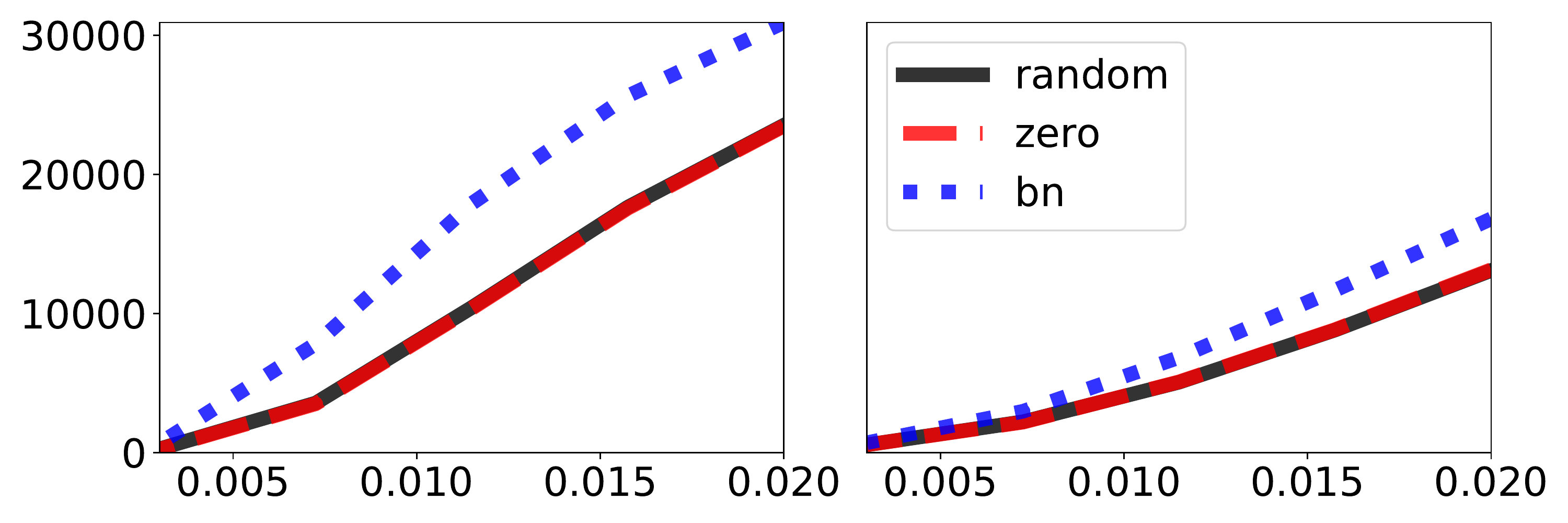}\\[-0.5em]
      {\small radius ($\epsilon$) \hspace{2cm}radius ($\epsilon$)}
    \end{minipage}
    \begin{minipage}{0.46\linewidth}
        \caption{\small 
        Average concentration of the spline partition boundary $\Omega^1$ of Resnet9 DN in $\epsilon$-balls around CIFAR images and around random images.
The vertical axis is the average number of folded hyperplane facets in $\Omega^1$ that pass through an epsilon ball centered on each kind of image.
As in Figure~\ref{fig:backprop}, we see that -- in contrast to zero and random bias initialization -- BN aligns the spline partition with the data.
        }
    \label{fig:neighbourregions}
    \end{minipage}
    \end{figure}

\vspace{-0.2cm}
\section{Benefit One: Batch Normalization is a Smart Initialization}
\label{sec:smart}

\vspace{-0.2cm}

Up to this point, our analysis has revealed that BN effects a task-independent, unsupervised learning that aligns a DN's spline partition with the training data independent of any labels.
We now demonstrate that BN's utility extends to supervised learning tasks like regression and classification that feature both data and labels.
In particular, BN can be viewed as providing a ``smart initialization'' that expedites
SGD-based supervised learning.
Our results augment recent empirical work on the importance of initialization on DN performance from the perspective of slope variance constraints \cite{mishkin2015all,xie2017all}, singular value constraints \cite{jia2017improving},
and orthogonality constraints \cite{saxe2013exact,bansal2018can}.

First, BN translates and folds the hyperplanes and facets that construct the DN's spline partition $\Omega$ so that they are closer to the data.
This creates more granular partition regions around the training data, which enables better approximation of complicated regression functions \cite{devore1993constructive} and richer 
classification boundaries \cite{balestriero2019geometry,DBLP:journals/corr/abs-2102-11535}.
For binary classification, for example, this result follows from the fact that the decision boundary created by an $L$-layer DN is 
precisely the folded hyperplane $\F_{L,1}$ generated by the final layer. 
We also prove in Proposition~\ref{prop:each_side} in Appendix~\ref{proof:prop_each_side} that BN primes SGD learning so that the decision boundary passes through the data in every mini-batch.

We now empirically demonstrate the benefits of BN's smart initialization for learning by a comparing the same three initialization strategies from Section~\ref{sec:density}
in a classification problem using a ResNet9 with random weights.
In each experiment, the weights and biases are initialized according to one of the three strategies (zero bias, random bias, and BN over the entire training set), and then standard SGD is performed over mini-batches.  
Importantly, in the BN case, the BN parameters $\bmu_{\ell},\bbeta_{\ell}$ computed for the initialization are frozen for all SGD iterations; this will showcase BN's role as a smart initialization.
As we see from Figure ~\ref{fig:test}, SGD with BN initialization converges faster and to an higher-performing classifier. 
Since BN is only used as an initializer here, the performance gain can be attributed to the better positioning of the ResNet's initial spline partition $\Omega$.

\begin{figure}[t]
    \centering
    \begin{minipage}{0.022\linewidth}
    \rotatebox{90}{\hspace{0.5cm}Top-1 test accuracy}
    \end{minipage}
    \begin{minipage}{0.97\linewidth}
        \centering
        \includegraphics[width=\linewidth]{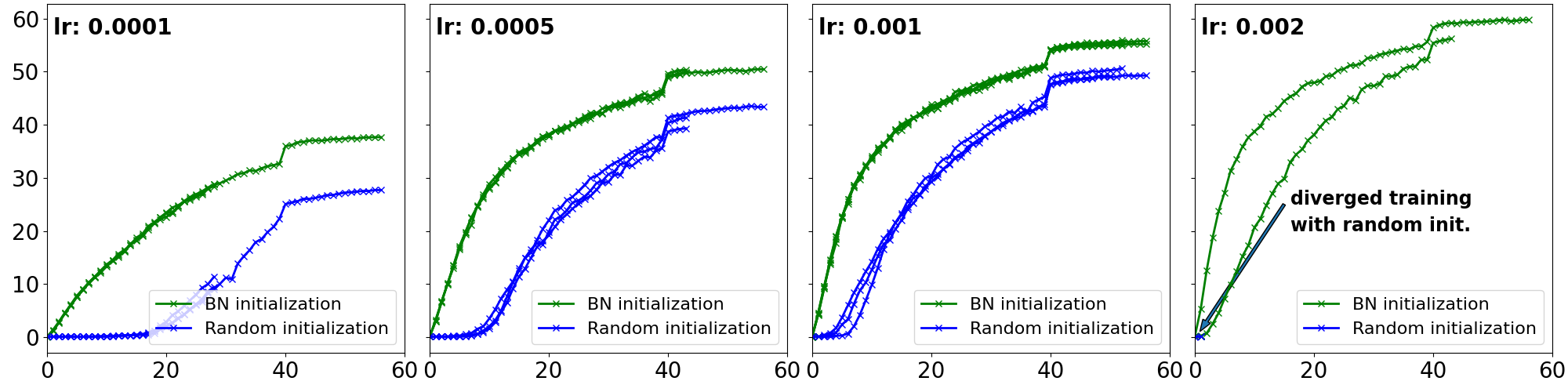}\\
        learning epochs
    \end{minipage}
    \begin{minipage}{\linewidth}
        \caption{\small 
        Image classification using a Resnet50 on Imagenet. 
        Standard data augmentation was used during SGD training but no BN was employed.
        Instead, the DN was initialized with random weights (\color{blue}blue\color{black}) or with random weights and warmed-up BN statistics (\color{green}\color{black}) across the entire training set (recall Figure~\ref{fig:backprop} and (\ref{eq:bnupdate})). Each {\bf column} corresponds to different learning rate used by SGD and multiple runs are produced for the right cases.
        BN's ``smart initialization'' reaches a higher-performing solution faster because it provides SGD with a spline partition that is already adapted to the training dataset. This findings provides a novel and complementary understanding of BN that was previously studied solely when continuously employed during training as a mean to better condition a DN's loss landscape. 
        We also show in Figure~\ref{fig:test} in the Appendix a similar observation on CIFAR100.}
    \label{fig:test}
    \end{minipage}
  \end{figure}

This result showcases the importance of DN initialization and how a good initialization alone plays a crucial role in performance, as has also been empirically studied with slope variance constraints \cite{mishkin2015all,xie2017all}, singular values constraints \cite{jia2017improving} or with orthogonal constraints \cite{saxe2013exact,bansal2018can}.

\vspace{-0.2cm}
\section{Benefit Two: Batch Normalization Increases Margins}
\label{sec:noise}

\vspace{-0.2cm}

The BN parameters $\bmu_{\ell},\bsigma_{\ell}$ are re-calculated for each mini-batch $\B_\ell$ and hence can be interpreted as stochastic estimators of the mean and standard deviation of the complete training data set $\X_\ell$.
A direct calculation \cite{von2014mathematical} yields the following result.


\begin{prop}
\label{prop:variance}
Consider layer $\ell>1$ of a BN-equipped DN as described in (\ref{eq:BN}).
Assume that the layer input $\bz_\ell$ follows an arbitrary iid distribution with zero mean and diagonal covariance matrix $\diag(\brho)$.
Then we have that
\begin{equation}
{\rm var}(\mu_{\ell,k}) \hspace{-0.1cm}=\hspace{-0.1cm} \frac{\langle \bw^2_{\ell,k},\brho\rangle}{|\B_\ell|}\leq \frac{\|\bw_{\ell,k}^2\| \, \|\brho\|}{|\B_\ell|}, ~~
{\rm var}(\sigma_{\ell,k}^2)\hspace{-0.1cm}=\hspace{-0.1cm}\frac{1}{|\B_\ell|}\left(\hspace{-0.1cm}\phi_{\ell,k}^4-\frac{\langle \bw_{\ell,k}^2,\rho\rangle^2(|\B_\ell|-3)}{|\B_\ell|-1}\hspace{-0.1cm}\right).
\label{eq:BNstats}
\end{equation}
with $\phi_{\ell,k}^4$ the fourth-order central moment of $\langle \bw_{\ell,k},\bz_\ell \rangle$ and $\bw_{\ell,k}^2$ the coordinate-wise square.
\end{prop}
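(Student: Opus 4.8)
The plan is to strip away the layer geometry and recognize both quantities as the sample mean and (biased) sample variance of an i.i.d.\ \emph{scalar} sequence, after which the statement becomes a textbook moment computation. Write $m=|\B_\ell|$, enumerate the mini-batch feature maps $\bz_{\ell,1},\dots,\bz_{\ell,m}$, and set $X_i\triangleq\langle\bw_{\ell,k},\bz_{\ell,i}\rangle$. Under the stated hypothesis the $\bz_{\ell,i}$ are i.i.d.\ with mean $\bZero$ and covariance $\diag(\brho)$, so the $X_i$ are i.i.d.\ real random variables with $\E[X_i]=0$, with variance
\[
\E[X_i^2]=\bw_{\ell,k}^\top\diag(\brho)\,\bw_{\ell,k}=\langle\bw_{\ell,k}^2,\brho\rangle
\]
($\bw_{\ell,k}^2$ the coordinatewise square, where the diagonal covariance is exactly what collapses the quadratic form to an inner product), and with fourth central moment $\phi_{\ell,k}^4=\E[X_i^4]$. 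By (\ref{eq:bnupdate}), $\mu_{\ell,k}=\frac1m\sum_iX_i$ and $\sigma_{\ell,k}^2$ is the sample variance of the $X_i$, up to the conventional $1/m$ versus $1/(m-1)$ normalization (the stated formula matches the $1/(m-1)$ version; this affects only the $m$-dependent constants).

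For $\mu_{\ell,k}$: independence and additivity of variance give $\Var(\mu_{\ell,k})=m^{-2}\sum_i\Var(X_i)=m^{-1}\langle\bw_{\ell,k}^2,\brho\rangle$, which is the first identity; the bound is then immediate from Cauchy--Schwarz applied to the entrywise nonnegative vectors $\bw_{\ell,k}^2$ and $\brho$.

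For $\sigma_{\ell,k}^2$: this is precisely the classical variance-of-the-sample-variance identity, which for an i.i.d.\ sample with second and fourth central moments $\mu_2,\mu_4$ reads $\Var(\sigma_{\ell,k}^2)=\frac1m\big(\mu_4-\frac{m-3}{m-1}\mu_2^2\big)$; substituting $\mu_2=\langle\bw_{\ell,k}^2,\brho\rangle$ and $\mu_4=\phi_{\ell,k}^4$ yields the claim, and one may simply cite \cite{von2014mathematical}. For a self-contained argument I would write $\sigma_{\ell,k}^2=\frac1m\sum_iX_i^2-\big(\frac1m\sum_iX_i\big)^2$, then compute $\E[\sigma_{\ell,k}^2]$ and $\E[(\sigma_{\ell,k}^2)^2]$ by splitting the sums over pairs $(i,j)$ and quadruples $(i,j,p,q)$ according to their coincidence pattern; since $\E[X_i]=0$, every tuple in which some index occurs exactly once drops out, so only the ``all indices equal'' pattern (contributing $\phi_{\ell,k}^4$) and the ``two disjoint pairs'' pattern (contributing $\langle\bw_{\ell,k}^2,\brho\rangle^2$) survive, and the computation reduces to counting the multiplicities of these two patterns and collecting coefficients.

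The only genuinely fiddly step is that last count: pinning down the exact coefficients $\tfrac1m$ and $\tfrac{m-3}{m(m-1)}$ requires careful bookkeeping of how many quadruples $(i,j,p,q)\in\{1,\dots,m\}^4$ realize each coincidence pattern, together with care about the $1/m$ versus $1/(m-1)$ normalization of $\sigma_{\ell,k}^2$. Everything else — the reduction to a scalar i.i.d.\ sequence, the $\mu_{\ell,k}$ variance, and the Cauchy--Schwarz bound — is a one-line argument, and invoking the textbook identity for the variance of the sample variance bypasses the bookkeeping entirely.
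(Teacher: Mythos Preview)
Your proposal is correct and follows essentially the same route as the paper: reduce $\langle\bw_{\ell,k},\bz_\ell\rangle$ to a scalar i.i.d.\ sequence, compute its mean and variance via the diagonal-covariance expansion, and then invoke the standard formulas for the variance of the sample mean and sample variance (the paper explicitly defers to ``the standard empirical variance estimator'' and cites \cite{von2014mathematical}, exactly as you do). Your write-up is in fact more detailed than the paper's own proof, which stops at the moment computation and does not spell out the Cauchy--Schwarz step or the coincidence-pattern combinatorics.
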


Consequently, during learning, BN's centering and scaling introduces both additive and multiplicative noise to the feature maps whose variance increases as the mini-batch size $|\B_\ell|$ decreases. 
This noise becomes detrimental for small mini-batches, which has been empirically observed in \cite{ioffe2017batch}. 
We illustrate this result in Figure~\ref{fig:noisedb}, where we depict the DN decision boundary realizations from different mini-batches. We also provide in Figure~\ref{fig:distribution} the empirical, analytical, and controlled parameter distributions of BN applied on a Gaussian input with varying mean and variance.

\begin{figure}[!t]
\begin{minipage}{0.59\linewidth}
\footnotesize
\begin{minipage}{.25\linewidth}
    \centering
    initialization \\ $|\B_\ell|$=16\\
    \includegraphics[width=1\linewidth]{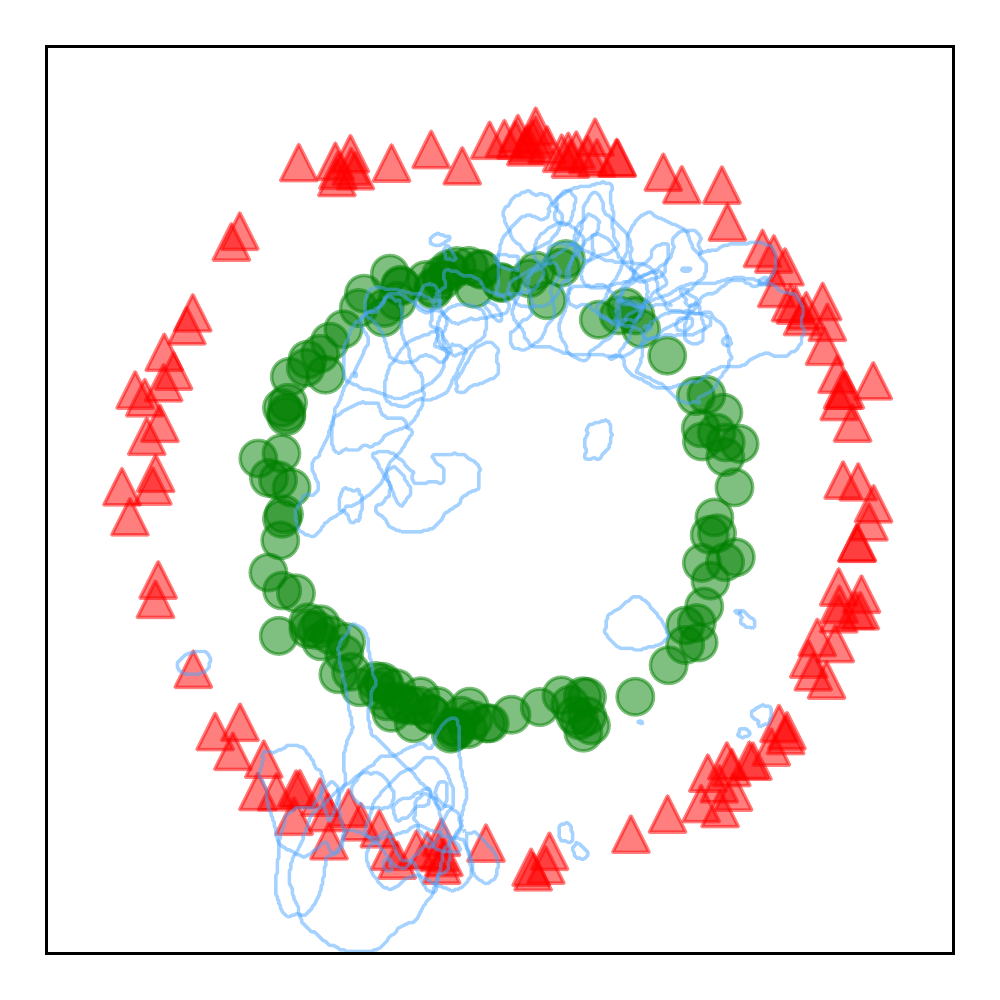}\\
    \includegraphics[width=1\linewidth]{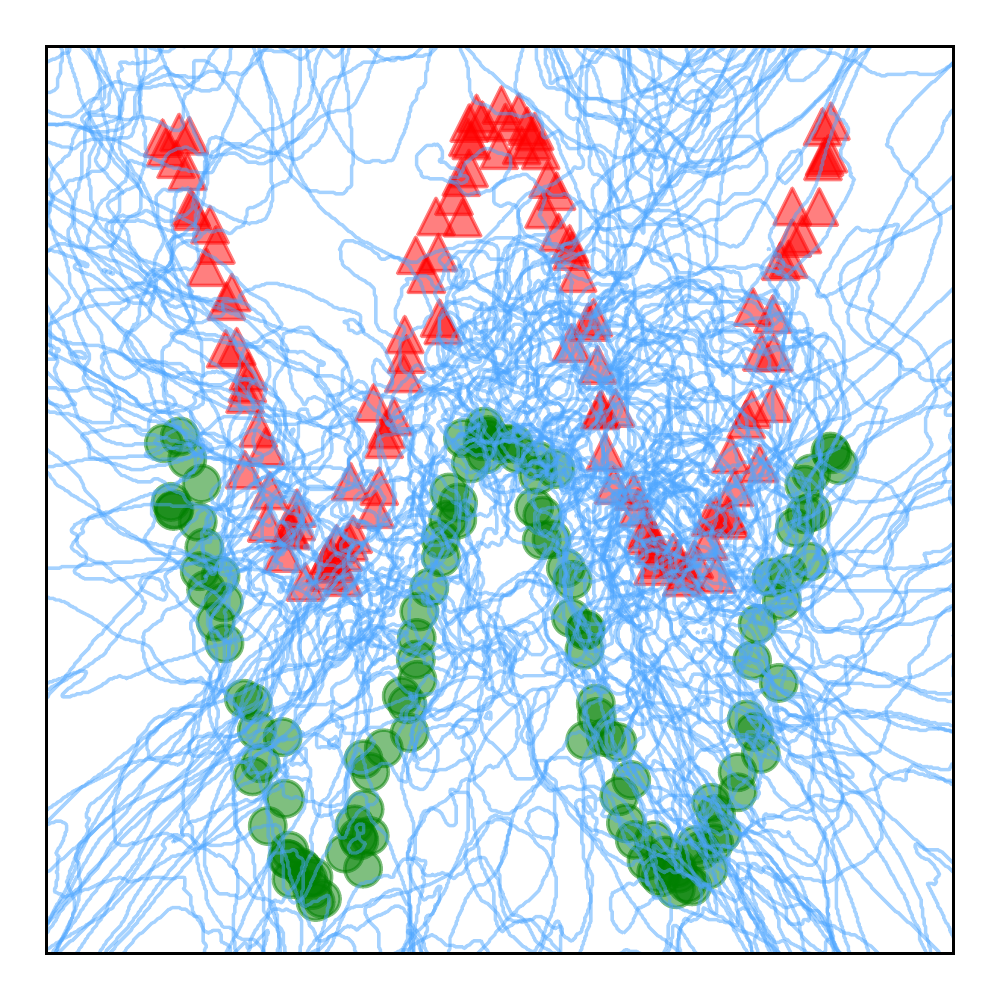}
\end{minipage}
\hspace{-0.2cm}
\begin{minipage}{.25\linewidth}
    \centering
    learned \\ $|\B_\ell|$=16\\
    \includegraphics[width=1\linewidth]{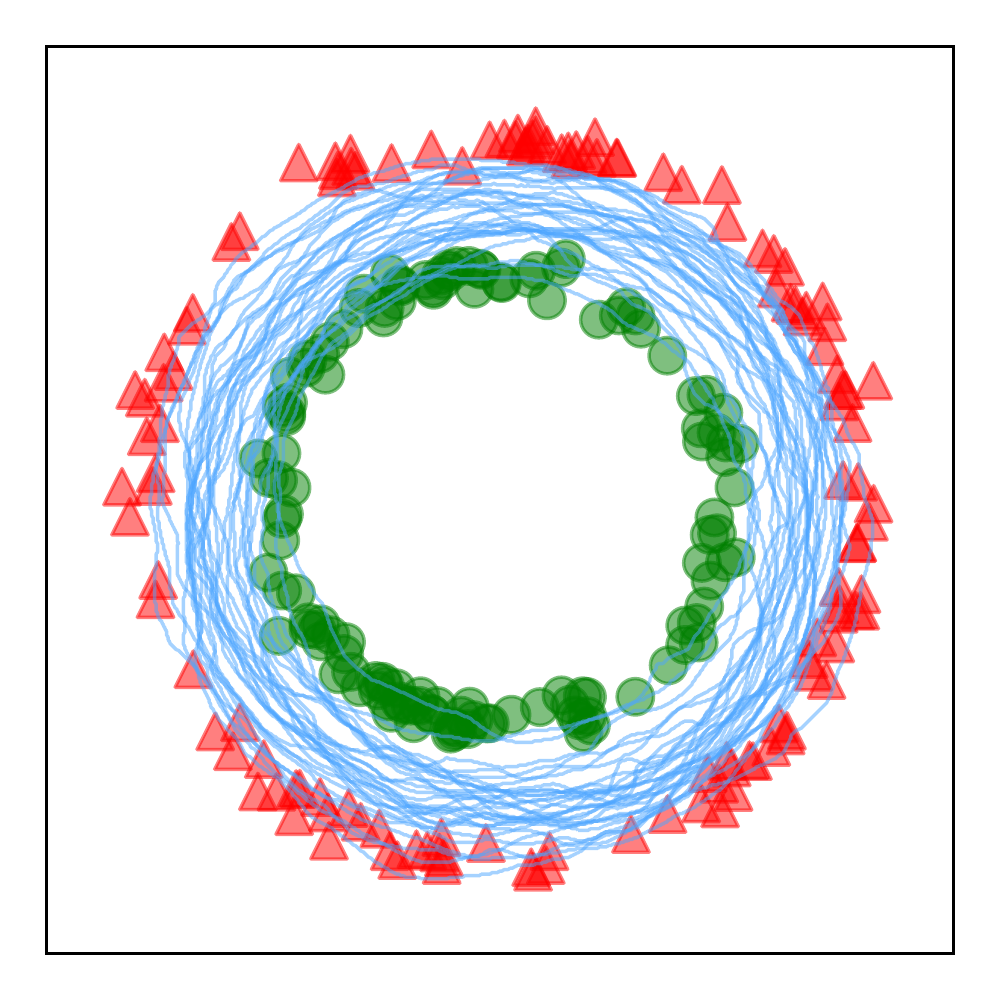}\\
    \includegraphics[width=1\linewidth]{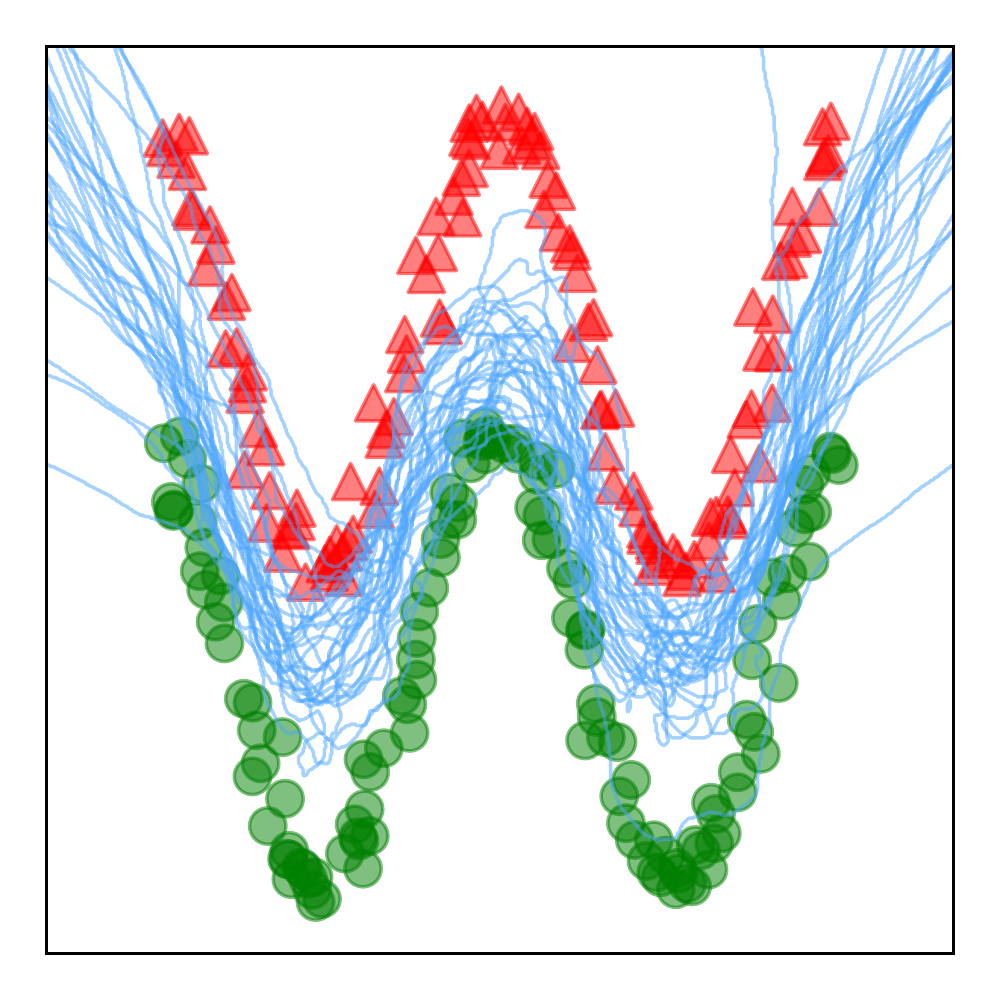}\\
\end{minipage}
\hfill
\begin{minipage}{.25\linewidth}
    \centering
    initialization \\ $|\B_\ell|$=256\\
    \includegraphics[width=1\linewidth]{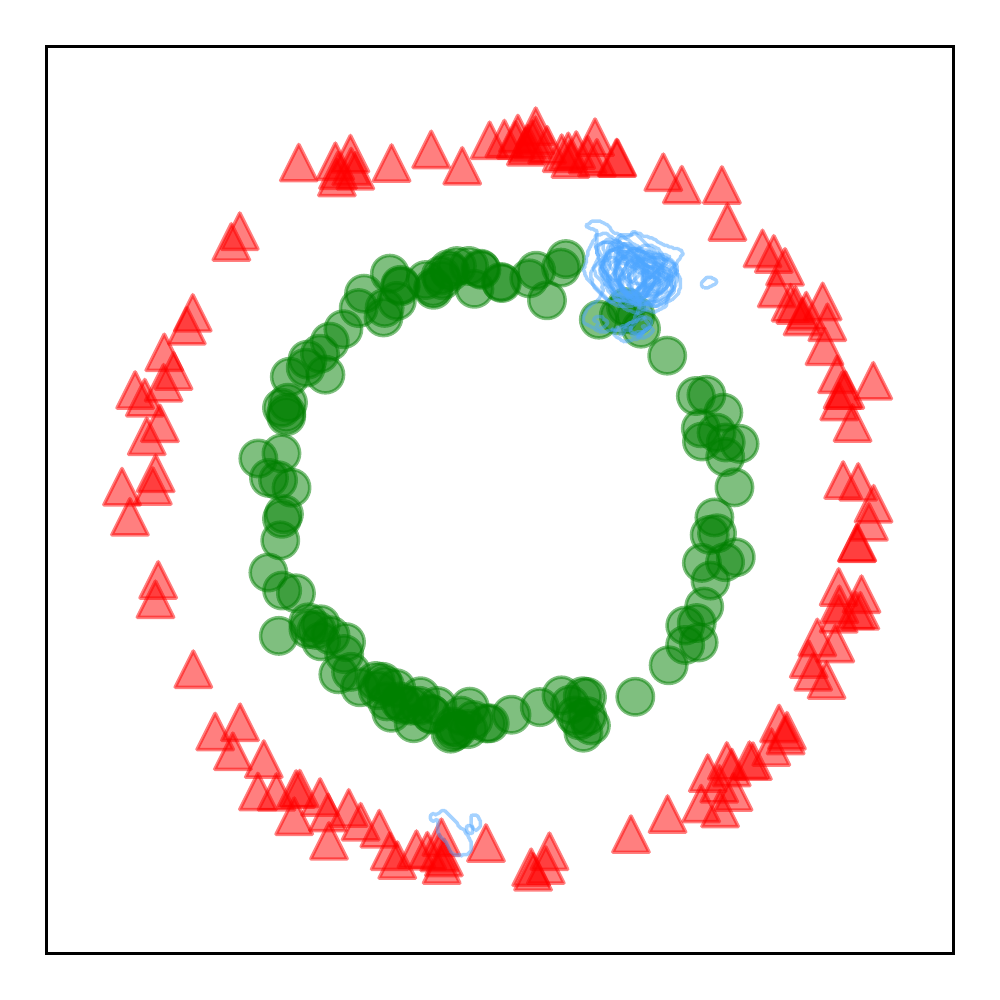}\\
    \includegraphics[width=1\linewidth]{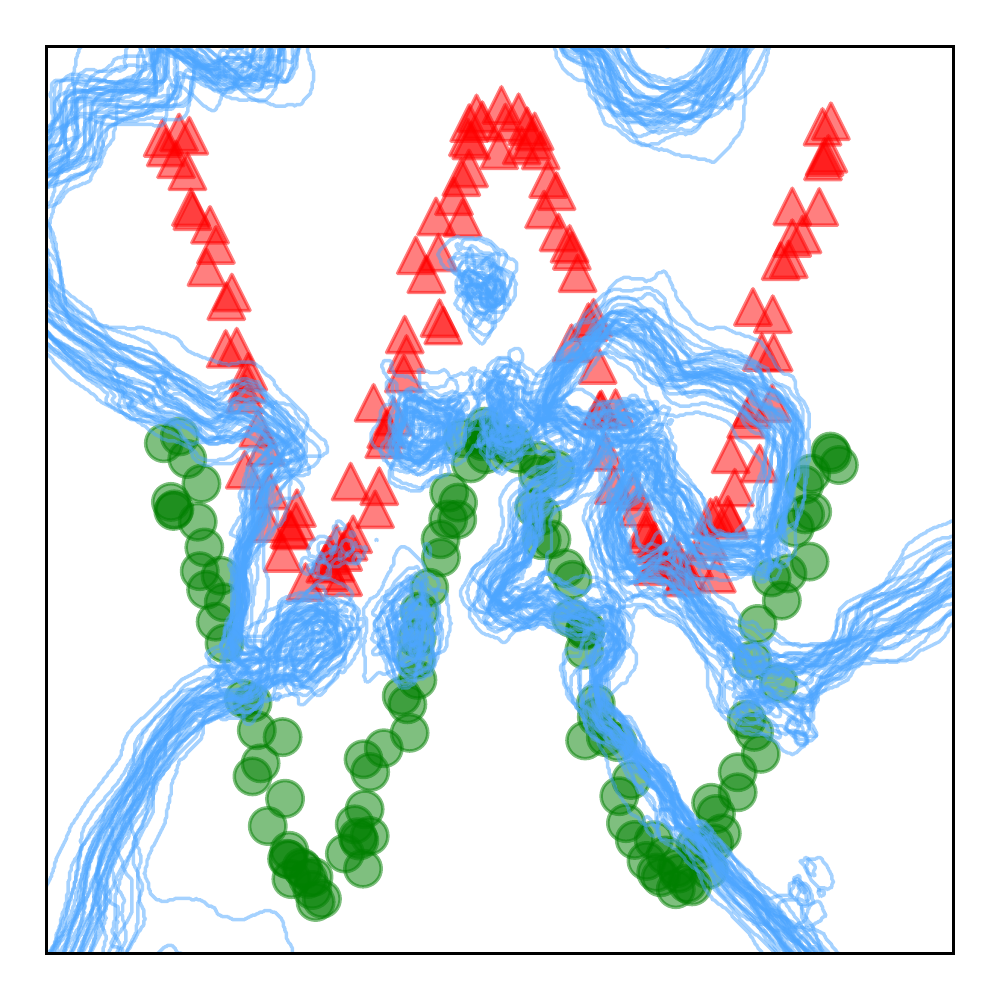}
\end{minipage}
\hspace{-0.2cm}
\begin{minipage}{.25\linewidth}
    \centering
    learned \\ $|\B_\ell|$=256\\
    \includegraphics[width=1\linewidth]{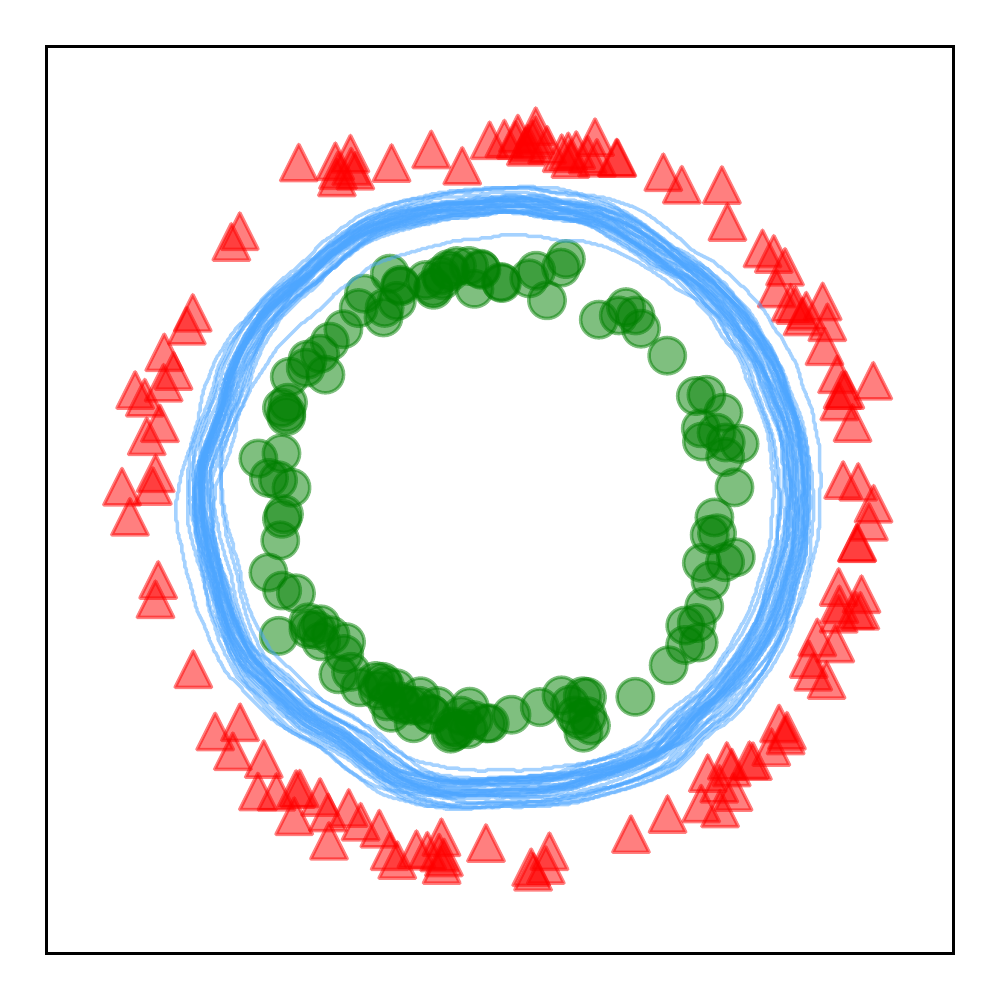}\\
    \includegraphics[width=1\linewidth]{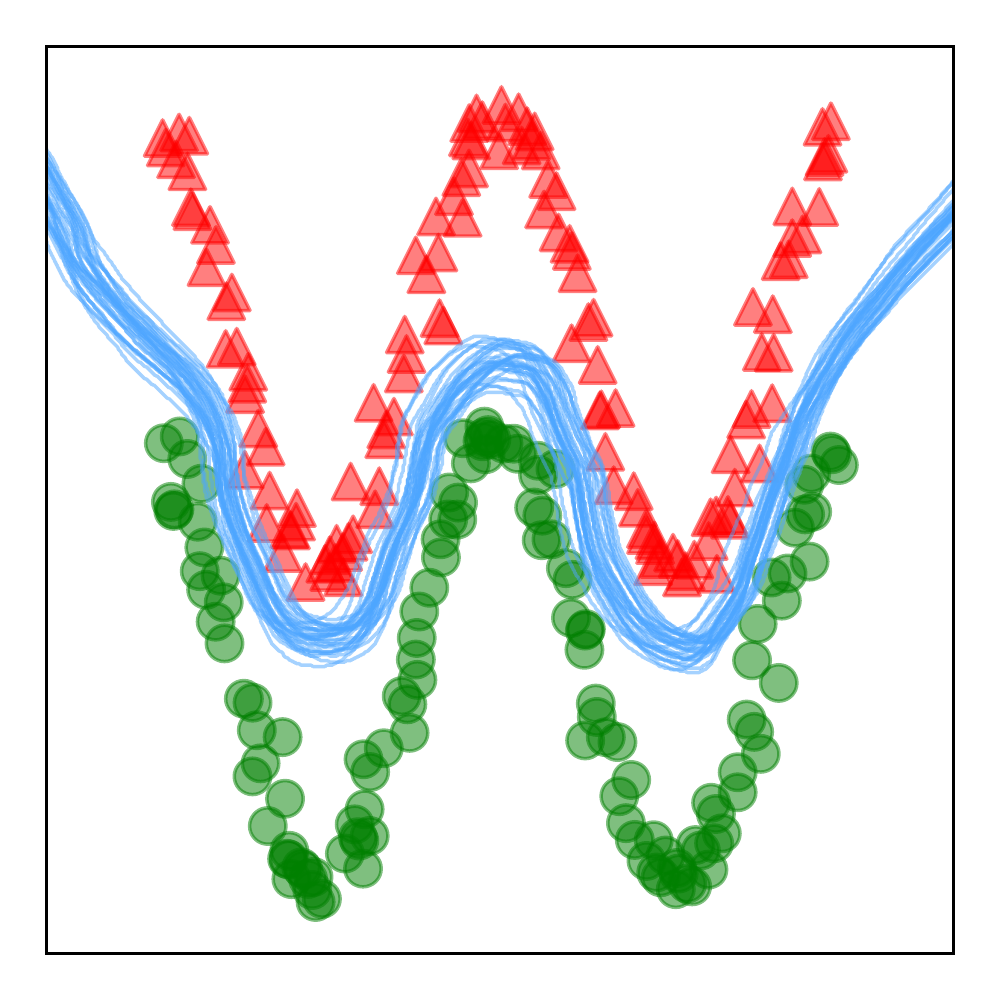}
\end{minipage}
\normalsize
\end{minipage}
\begin{minipage}{0.4\linewidth}
    \caption{\small 
    Realizations of the classification decision boundary on a toy 2D binary classification (\color{red}red \color{black}vs \color{green} green\color{black}) problem obtained solely by sampling different mini-batches and thus observing different realizations of $\bmu_{\ell},\bsigma_{\ell}$ 
    (recall (\ref{eq:bnupdate})) at initialization and after training.
    Each mini-batch produces a different decision boundary depicted in \color{blue}blue\color{black}. For two different mini-batch sizes $|\B_\ell|=16,256$, we change the variance as per Proposition~\ref{prop:variance}.
    Larger batch sizes clearly produce smaller variability in the decision boundary both at initialization and after training; $\bmu_{\ell},\bsigma_{\ell}$ distributions are provided in Figure~\ref{fig:distribution}.
    }
    \label{fig:noisedb}
\end{minipage}
\end{figure}

\begin{figure}[t!]
    \centering
    \begin{minipage}{0.02\linewidth}
    \rotatebox{90}{\hspace{0.2cm}distrib. of $\sigma_{\ell,k}$\hspace{0.8cm}distrib. of $\mu_{\ell,k}$}
    \end{minipage}
    \begin{minipage}{0.46\linewidth}
    \includegraphics[width=\linewidth]{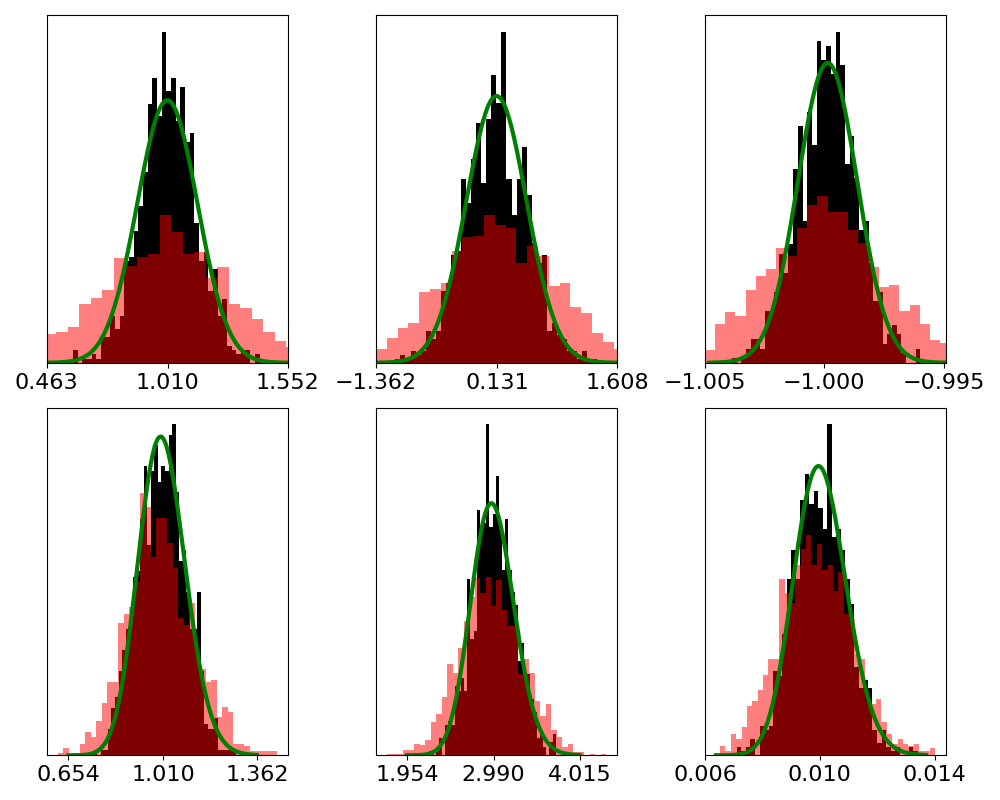}
    \\
    \hspace*{6mm} $k=1$ \hspace{11mm} $k=2$ \hspace{11  mm} $k=3$
    \end{minipage}
    \begin{minipage}{0.50\linewidth}
    \caption{\small 
    Distributions of $\mu_{\ell,k}$ (top row) and $\sigma_{\ell,k}$ (bottom row) for three units in a DN layer with $\bW_{\ell}\bz_{\ell-1} \sim \mathcal{N}([1,0,-1],\diag([1,3,0.1]))$ ($\ell$ is not relevant in this context). The empirical distributions in {\bf black} are obtained by repeatedly sampling mini-batches of size $64$ from a training set of size $1000$.
    The analytical distributions in \color{green}{\bf green} \color{black}are from (\ref{eq:BNstats}).
    The empirical distributions in \color{red}{\bf red} \color{black}are of the noise-controlled BN, where we increase the variances of the BN parameters to match the variances that would result from a virtual mini-batch size ($32$) that is smaller than the actual size ($64$), hence producing more regularizing jitter perturbations. The different realizations of $\bmu_{\ell}$ and $\bsigma_{\ell}$ for each mini-batch affect the geometry of the DN partition and decision boundary (see Figure~\ref{fig:evolution_boundary}) of the current mini-batch.
    }
    \label{fig:distribution}    
    \end{minipage}
\end{figure}

Figure~\ref{fig:noisedb} suggests the interpretation that BN noise induces ``jitter'' in the DN decision boundary.
Small amounts of jitter can be beneficial, since it forces the DN to learn a representation with an increased margin around the decision boundary. 
Large amounts of jitter can be detrimental, since the increased margin around the decision boundary might be too large for the classification task.
This jitter noise is reminiscent of dropout and other techniques that artificially add noise in the DN input and/or feature maps to improve generalization performance \cite{srivastava2013improving,pham2014dropout,molchanov2017variational,wang2018max}.
We focus on the effect of BN jitter on DNs for classification problems here, but jitter will also improve DN performance on regression problems.

To further demonstrate that jitter noise increases the margin between the learned decision boundary and the training set (and hence improves generalization), we
conducted an experiment where we fed additional Gaussian additive noise and
Chi-square multiplicative noise to the layer pre-activations of a DN to increase the variances of $\bmu_{\ell}$ and $\bsigma_{\ell}$ as desired
(as in Figure~\ref{fig:distribution}), in addition to the BN-induced noise.
For a Resnet9, we observed that increasing these variances about $15\%$ increased the classification accuracies (averaged over $5$ runs) from $93.34\%$ to $93.68\%$ (CIFAR10), from $72.22\%$ to $72.74\%$ (CIFAR100) and from $96.16\%$ to $96.41\%$ (SVHN).
Note that this performance boost comes in addition to that obtained from BN's smart initialization (recall Section \ref{sec:smart}).

\vspace{-0.2cm}
\section{Conclusions}

\vspace{-0.2cm}

In this paper, our theoretical analysis of BN shed light and explained two novel crucial ways on how BN boosts DNs performances. 
First, BN provides a ``smart initialization'' that solve a total least squares (TLS) optimization to adapt the DN input space spline partition to the data to improve learning and ultimately function approximation.  
Second, for classification applications, BN introduces a random jitter perturbation to the DN decision boundary that forces the model to learn  boundaries with increased margins to the closest training samples.
From the results that we derived one can directly see how to further improve batch-normalization. For example, by controlling the strength of the noise standard deviation of the batch-normalization parameters to further control the decision boundary margin, or by altering the optimization problem that batch-normalization minimizes to enforce a specific adaptivity of the DN partition to the dataset at hand.
We hope that this work will motivate researchers to further extend BN into task-specific methods leveraging a priori knowledge of the properties one desires to enforce into their DN.



\bibliography{iclr2023_conference}
\bibliographystyle{iclr2023_conference}

\appendix
\newpage
\begin{center}
\rule{0.7\linewidth}{1.2pt}\\
\Huge{Appendix:\\Batch Normalization Explained}
\rule{0.7\linewidth}{0.8pt}
\end{center}

We provide in the following appendices details on the Max-Affine Spline formulation of DNs (Sec.~\ref{app:details}) and the proofs of the various theoretical results from the main text (Sec.~\ref{app:proofs}).

\section{Details on Continuous Piecewise Affine Deep Networks}
\label{app:details}

The goal of this section is to provide additional details into the forming of the per-region affine mappings of CPA DNs.

As mentioned in the main text, any DN that is formed from CPA nonlinearities can be expressed itself as a CPA operator. The per-region affine mappings are thus entirely defined by the {\em state} of the DN nonlinearities. For an activation function such as ReLU, leaky-ReLU or absolute value, the nonlinearity {\em state} is completely determined by the sign of the activation input as it determines which of the two linear mapping to apply to produce its output. Let denote this code as $\bq_{\ell}(\bz_{\ell-1}) \in \{\alpha, 1\}^{D_{\ell}}$ given by
\begin{align}
    [\bq_{\ell}(\bz_{\ell-1})]_i = \begin{cases}
    \alpha, & [\bW_{\ell}\bz_{\ell-1}+\bb_{\ell}]_i \leq 0\\
    1, & [\bW_{\ell}\bz_{\ell-1}+\bb_{\ell}]_i > 0
    \end{cases}
\end{align}
where the pre-activation formula above can be replaced with the one from (\ref{eq:BN}) if BN is employed. 
For a max-pooling type of nonlinearity the state corresponds to the argmax obtained in each pooling regions, for details on how to generalize the below in that case we refer the reader to \cite{balestriero2018spline}.
Based on the above, the layer input-output mapping can be written as
\begin{align}
   \bz_{\ell}=\bQ_{\ell}(\bz_{\ell-1})(W_{\ell}\bz_{\ell-1}+\bb_{\ell}).
\end{align}
where $\bQ_{\ell}$ produces a diagonal matrix from the vector $\bq_{\ell}$, and one has $\alpha=0$ for ReLU, $\alpha=-1$ for absolute value and $\alpha>0$ for leaky-ReLU; see \cite{balestriero2018spline} for additional details.
The up-to-layer-$\ell$ mapping can thus be easily written as 
\begin{align}
    \bz_{\ell}=A_{1|\ell}(\bx)\bx+B_{1|\ell}(\bx)
\end{align}
with the following slope and bias parameters
\begin{align}
     A_{1|\ell}(\bx) = & \bQ_{\ell}\bW_{\ell}\bQ_{\ell-1}\bW_{\ell-1} \dots \bQ_{1}\bW_{1},\\
     B_{1|\ell}(\bx) = & \sum_{i=1}^{\ell}\left( \bQ_{\ell}\bW_{\ell}\bQ_{\ell-1}\bW_{\ell-1}\dots \bQ_{i+1}\bW_{i+1}\right)\bb_{i},
    \end{align}
where for clarity we abbreviated $\bQ_{\ell}(\bz_{\ell-1})$ as $\bQ_{\ell}$.

From the above formulation, it is clear that whenever the codes $\bq_{\ell}$ stay the same for different inputs, the layer input-output mapping remains linear. This defines a region $\omega_{\ell}$ of $\Omega_{\ell}$ from (\ref{eq:boundary}), the layer-input-space partition region, as
\begin{align}
     \omega^{\bq}_{\ell} = \{\bz_{\ell-1} \in \mathbb{R}^{D_{\ell-1}}: \bq_{\ell}(\bz_{\ell-1})=\bq\}, \bq \in \{\alpha, 1\}^{D_l}.
\end{align}
In a similar way, the up-to-layer-$\ell$ input space partition region can be defined.

The multilayer region is defined as 
\begin{align}
     \omega^{\bq_1,\dots,\bq_\ell}_{1|\ell} = \bigcap_{i=1}^{\ell}\{\bx \in \mathbb{R}^{D}: \bq_{i}(\bx)=\bq_i\}, \bq_i \in \{\alpha, 1\}^{D_l}.
\end{align}

\begin{defn}[Layer/DN partition]
\label{def:partition}
The layer $\ell$ (resp. -up-to-layer-$\ell$) input space partition is given by
\begin{align}
    \Omega_{\ell} &= \{\omega_{\ell}^{\bq}, \bq\in \{\alpha, 1\}^{D_l}\}\setminus \emptyset,\\
    \Omega_{1|\ell} &= \{\omega^{\bq_1,\dots,\bq_\ell}_{1|\ell}, \bq_i\in \{\alpha, 1\}^{D_i}, \forall i\}\setminus \emptyset.
\end{align}
\end{defn}
Note that $\Omega_{1|L}$ forms the entire DN input space partition. 
Both unit and layer input space partitioning can be rewritten as Power Diagrams, a generalization of Voronoi Diagrams \cite{balestriero2019geometry}. 
Composing layers then simply refines successively the previously build input space partitioning via a subdivision process to obtain the -up to layer $\ell$- input space partitioning $\Omega_{|\ell}$.

\section[Batch normalization folds the partition regions towards the \\ training data]{Batch normalization parameter $\bsigma$ folds the partition regions towards the training data}
\label{sec:sigma}

We are now in a position to describe the effect of the BN parameter $\bsigma_\ell$, which has no effect on the spline partition of an individual DN layer but comes into play for a composition of two or more layers.
In contrast to the hyperplane translation effected by $\bmu_\ell$, $\bsigma_\ell$ optimizes the {\em dihedral angles} between adjacent facets of the folded hyperplanes in subsequent layers in order to swing them closer to the training data.

Define $\bQ_\ell$ as the square diagonal matrix whose diagonal entry $i$ is determined by the sign of the pre-activation $h_{\ell,i}$.
That is,
\begin{equation}
[\bQ_\ell]_{i,i} = 
\left\{
\begin{array}{ll}
 \alpha/\sigma_{\ell,i}, \quad & h_{\ell,i} < 0  \\[1mm]
  1/\sigma_{\ell,i},   & h_{\ell,i} \geq 0,
\end{array}
\right.
\label{eq:Q}
\end{equation}
with $\alpha=0$ for ReLU, $\alpha>0$ for leaky-ReLU, and $\alpha= -1$ for absolute value 
(recall the definition of the activation function from Section~\ref{sec:spline1}; see \cite{balestriero2018from} for additional nonlinearities).
Note that $\bQ_\ell$ is constant across each region $\omega$ in the layer's spline partition $\Omega_\ell$ since, by definition, none of the pre-activations $h_{\ell,i}$ change sign within region $\omega$.
We will use $\bQ_\ell(\omega)$ to denote the dependency on the region $\omega$; to compute $\bQ_\ell(\omega)$ given $\omega$, one merely samples a layer input from $\omega$, computes the pre-activations $h_{\ell,i}$, and applies (\ref{eq:Q}).

In Appendix~\ref{XXX} 
we prove that the BN parameter $\bsigma_{\ell}$ adjusts the dihedral folding angles of adjacent facets of each folded hyperplane created by layer $\ell+1$ in order to align the facets with the training data.  
Figure~\ref{fig:evolution_boundary}(c) illustrates empirically how $\bsigma_1$ folds the facets of $\F_{2,k}$ realized by the second layer of a toy DN.
Since the relevant expressions quickly (but predictably) grow in length with the number of layers, to expose the salient points, but without loss of generality, we will focus the next theorem on the composition of the first two DN layers (layers $\ell=1,2$). 
In this case, there are two geometric quantities of interest that combine to create the input space spline partition $\Omega=\Omega_{|2}$: layer 1's hyperplanes $\H_{1,i}$ and layer 2's folded hyperplanes $\F_{2,k}$.

\begin{thm}
\label{thm:sigma}
Given a 2-layer ($\ell=1,2$) BN-equipped DN employing a leaky-ReLU or absolute value activation function, 
consider two adjacent regions $\omega,\omega'$ from the spline partition $\Omega$ whose boundaries contain the facets $\F_{2,k,\omega},\F_{2,k,\omega'}$ created by folding across the boundaries' shared hyperplane $\H_{1,i}$ (see Figure~\ref{fig:Thm3}).
The dihedral angles between these three (partial) hyperplanes are given by:
\begin{align}
   \theta \left(\F_{2,k,\omega},\H_{1,i} \right)
   &=
   \arccos\left(
   {\frac{\left| \bw_{2,k}^\top\,\bQ_1(\omega)\bW_1\bw_{1,i} \right|}
   {\left\|\bw_{2,k}^\top\,\bQ_1(\omega)\bW_1 \right\| \; \left\| \bw_{1,i} \right\|}}\right),
   \label{eq:Qangle1} 
   \\[2mm]
   \theta \left(\F_{2,k,\omega'},\H_{1,i} \right)
   &=
   \arccos\left(
   {\frac{\left| \bw_{2,k}^\top\,\bQ_1(\omega')\bW_1\bw_{1,i} \right|}
   {\left\|\bw_{2,k}^\top\,\bQ_1(\omega')\bW_1 \right\| \; \left\| \bw_{1,i} \right\|}}\right),
   \label{eq:Qangle2}
   \\[2mm]
    \theta \left(\F_{2,k,\omega},\F_{2,k,\omega'} \right)
    &=
   \arccos\left(
   {\frac{\left| \bw_{2,k}^\top\,\bQ_1(\omega)\bW_1\bW_1^\top\bQ_1(\omega')\bw_{2,k} \right|}
   {\left\|\bw_{2,k}^\top\,\bQ_1(\omega)\bW_1 \right\| \; \left\| \bw_{2,k}^\top\,\bQ_1(\omega')\bW_1  \right\|}}\right).
   \label{eq:Qangle3}
\end{align}
\end{thm}

\begin{figure}[t]
\centering
\includegraphics[width=50mm]{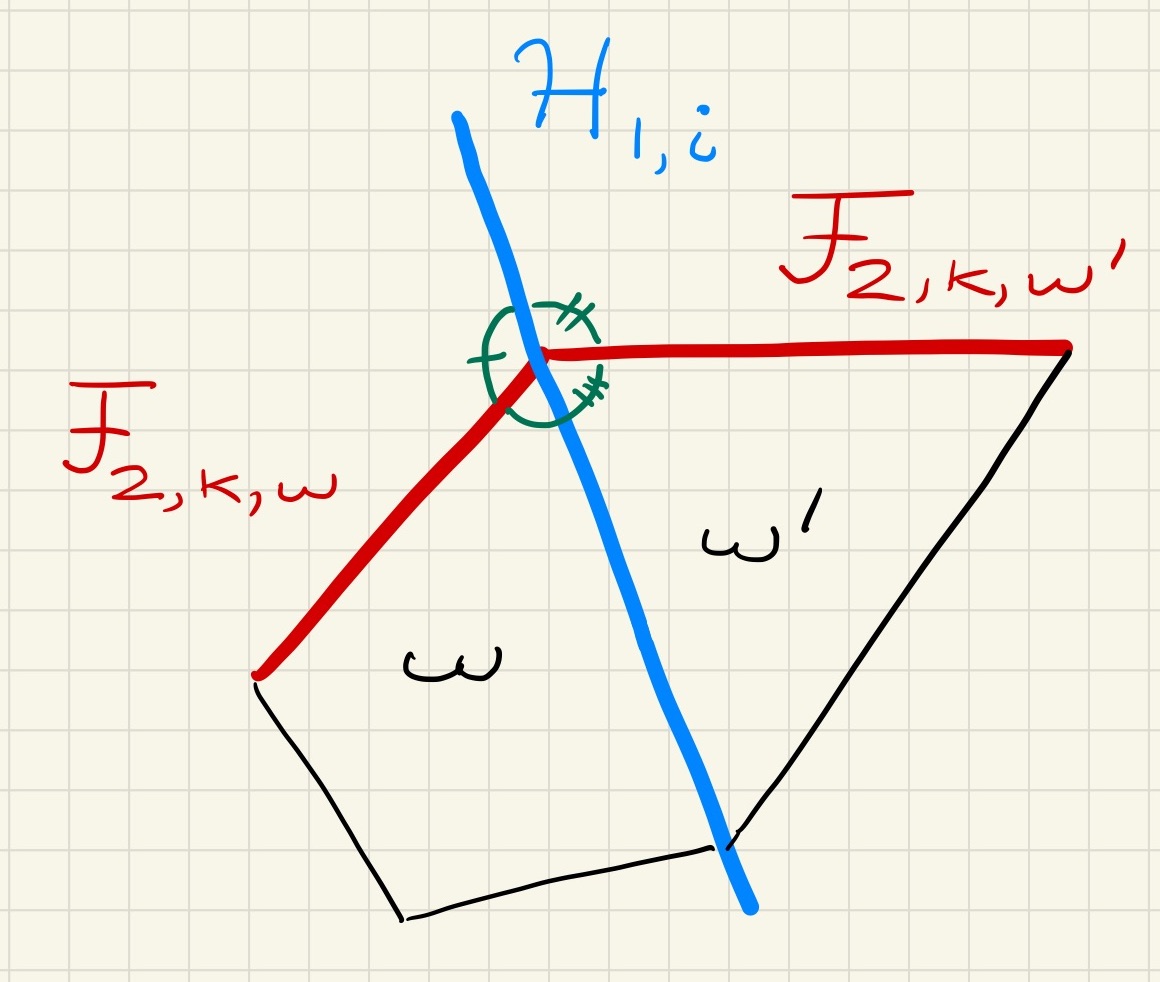}
\caption{\small Sketch of the situation in Theorem 3 for a two-dimensional input space.}
\label{fig:Thm3}
\end{figure}

In (\ref{eq:Qangle1})--(\ref{eq:Qangle3}), $\bQ_1(\omega)$ and $\bQ_1(\omega')$ differ by only one diagonal entry at index $(i,i)$: one matrix takes the value $\frac{1}{\sigma_{1,i}}$ and the other the value $\frac{\alpha}{\sigma_{1,i}}$, as per Eq.~\ref{eq:Q}.
%
Since 
(\ref{eq:optimization-k}) implies that $\sigma^2_{\ell,i}\propto \min_{\bmu_{\ell,i}}\mathcal{L}(\bmu_{\ell,i},\B_{\ell})$, we have the following two insights that we state for $\ell=1$.
(These results extend in a straightforward fashion for more than two layers as well as for more complicated piecewise linear activation functions.)

On the one hand, if $\H_{1,i}$ is well-aligned with the training data $\B_1$, then the TLS error, and hence $\sigma^2_{1,i}$
, will be small.
For the absolute value activation function, formulae (\ref{eq:Qangle1}) and (\ref{eq:Qangle2}) then tell us that both $\theta \left(\F_{2,k,\omega},\H_{1,i} \right)\approx 0$ and $\theta \left(\F_{2,k,\omega'},\H_{1,i} \right)\approx 0$, meaning that $\F_{2,k,\omega}$ and $\F_{2,k,\omega'}$ will be folded to closely align with $\H_{1,i}$ (and hence the data). The connection between $\sigma^2_{\ell,i}$ and (\ref{eq:Qangle1},\ref{eq:Qangle2}) lies in the entries of the $\bQ$ matrix. Basically, the $\bsigma_{\ell}$ are used in the denominator of $\bQ$ and thus bend more or less the angles.
For the ReLU/leaky-ReLU activation function, either $\F_{2,k,\omega}$ or $\F_{2,k,\omega'}$ will be folded to closely align with $\H_{1,i}$; the other facet will be unchanged/mildly folded.

On the other hand, if $\H_{1,i}$ is not well-aligned with the training data $\B_1$, then the TLS error, and hence $\sigma^2_{1,i}$, will be large.
This will force $\bQ_1(\omega)\approx \bQ_1(\omega')$ and thus $\theta \left(\F_{2,k,\omega},\F_{2,k,\omega'} \right)\approx \pi$, meaning that a poorly aligned layer-1 hyperplane $H_{1,i}$ will not appreciably fold intersecting layer-2 facets.

Figure~\ref{fig:distances} illustrates empirically how the BN parameter $\sigma_{\ell,k}$
measures the quality of the fit of the (folded) hyperplanes to the data in the TLS error sense for a toy DN.

In summary, and extrapolating to the general case, the effect of the BN parameter $\bsigma_\ell$ is to fold the layer-$(\ell+1)$ hyperplanes 
(also the $\ell+2$ and subsequent hyperplanes)
that contribute to the spline partition boundary $\Omega$ in order to align them with the layer-$\ell$ hyperplanes that match the data well.
Hence, not only $\bmu_{\ell}$ but also $\bsigma_{\ell}$ plays a crucial role in aligning the DN spline partition with the training data (recall Figure~\ref{fig:2d_partition_bn}).

\begin{figure}[t!]
    \centering
    \begin{minipage}{0.2\linewidth}
    \centering
    $\H_{1,k}$\\
    \includegraphics[width=\linewidth]{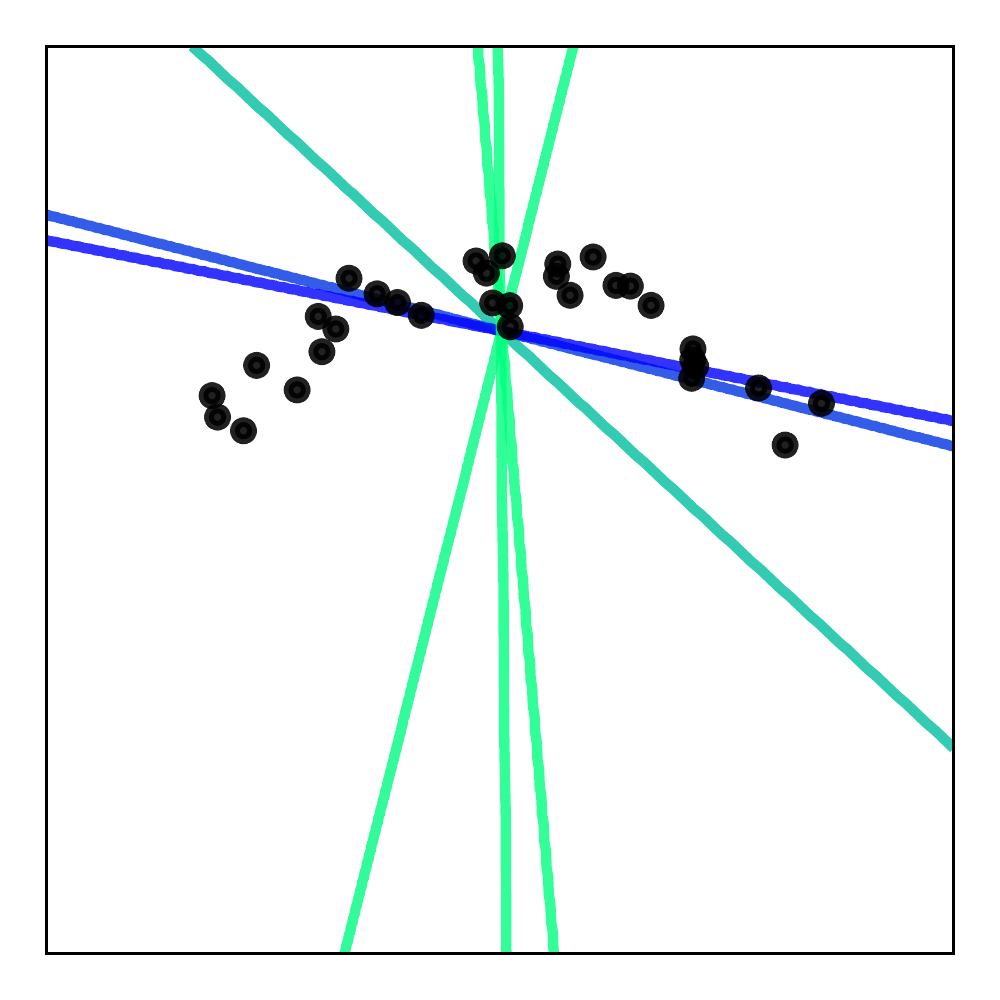}
    \end{minipage}
    \begin{minipage}{0.2\linewidth}
    \centering
    $\F_{4,k}$\\
    \includegraphics[width=\linewidth]{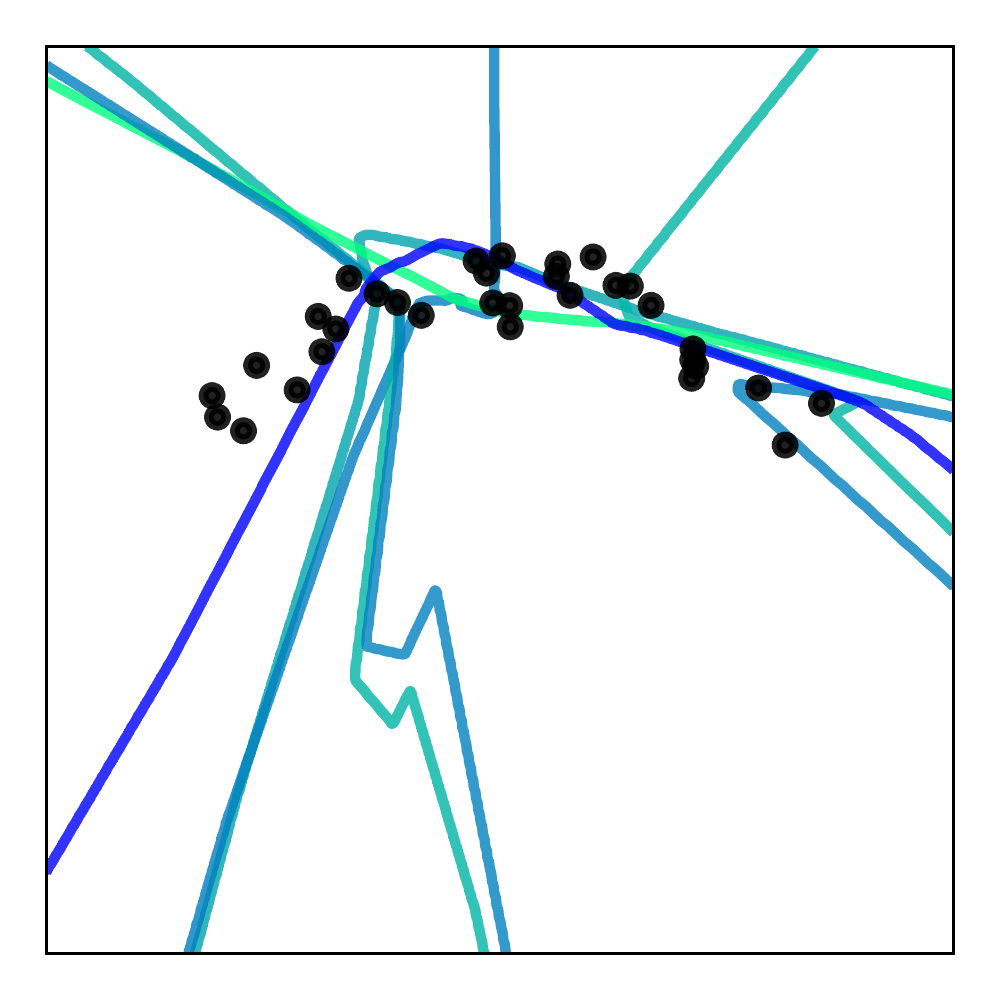}
    \end{minipage}
    \begin{minipage}{0.58\linewidth}
    \caption{\small 
    Layer-1 hyperplanes $\H_{1,k}$ (left) and layer-4 folded hyperplanes $\F_{4,k}$ (right) depicted in the 2D input space of a toy 4-layer DN trained on the data points denoted with black dots.  The (folded) hyperplanes are colored based on the corresponding value $\sigma^2_{\ell,k}/\|\bw_{\ell,k}\|^2$, which is proportional to the total least squares (TLS) fitting error to the data (blue: small error, close to the data points; green: large error, far from the data points).
}
    \label{fig:distances}
    \end{minipage}
\end{figure}

\section[Role of the BN Parameters]{Role of the BN Parameters $\bbeta$, $\bgamma$ and Proof of Proposition~\ref{prop:redundant}}
\label{sec:betagamma}

Without loss of generality, consider a simple two-layer DN to illustrate. Then, it is clear that $\bgamma_1$ simply rescales the rows of $\bW_2$ and the entries of $\bbeta_1$
\begin{align}
\bz_{2,r}
&=
a\!\left(
\frac
{
\sum_{k=1}^{D_1}\: [\bw_{2,r}]_k \: a\!\left(\frac{\langle \bw_{1,k},\bx\rangle-\mu_{1,k}}{\sigma_{1,k}}\,\gamma_{1,k}+\beta_{1,k}\right)
-\mu_{2,r}
}
{
\sigma_{2,r}
}
\:
\gamma_{2,r}
+
\beta_{2,r}
\right) 
\\
&=
a\!\left(
\frac
{
\sum_{k=1}^{D_1}\: \gamma_{1,k} \, [\bw_{2,r}]_k \: a\!\left(\frac{\langle \bw_{1,k},\bx\rangle-\mu_{1,k}}{\sigma_{1,k}} + \frac{\beta_{1,k}}{\gamma_{1,k}}
\right)
-\mu_{2,r}
}
{
\sigma_{2,r}
}
\:
\gamma_{2,r}
+
\beta_{2,r}
\right) 
\label{eq:pullout}
    \\[2mm]
&=
a\!\left(
\frac
{
\sum_{k=1}^{D_1}\: [\bw'_{2,r}]_k \: a\!\left(\frac{\langle \bw_{1,k},\bx\rangle-\mu_{1,k}}{\sigma_{1,k}} + \beta'_{1,k}
\right)
-\mu_{2,r}
}
{
\sigma_{2,r}
}
\:
\gamma_{2,r}
+
\beta_{2,r}
\right)  
\end{align}
and so does not need to be optimized separately.
Here, $[\bw_{2,r}]_k$ denotes the $k$-th entry of the $r$-th row of $\bW_2$.
We obtain (\ref{eq:pullout}), because standard activation and pooling functions (e.g., (leaky-)ReLU, absolute value, max-pooling) are such that $a(cu)=c\,a(u)$.
This leaves $\bbeta_{\ell}$ as the only learnable parameter that needs to be considered.

Now, our BN theoretical analysis relies on setting $\bbeta_{\ell}=\mathbf{0}$, which corresponds to the standard initialization of BN. In this setting, we saw that  BN ``fits'' the partition boundaries to the data samples exactly. Now, by observing the form of (\ref{eq:BN}), it is clear that learning $\bbeta_{\ell}$ enables to ``undo'' this fitting if needed to better solve the task-at-hand. However, we have found that in most practical scenarios, fixing $\bbeta_{\ell}=\mathbf{0}$ throughout training actually does not alter final performances. In fact, on Imagenet, the top1 accuracy of a Resnet18 goes from 67.60\% to 65.93\% when doing such a change, and on a Resnet34, from 68.71\% to 67.21\%. The drop seems to remain the same even on more complicated architecture as for a Resnet50, the top1 test accuracy goes down from 77.11\% to 74.98\%, where again we emphasize that the exact same hyper-parameters are employed for both situations i.e. the drop could potentially be reduced by tuning the hyper-parameters.
Obviously those numbers might vary depending on optimizers and other hyper-parameters, we used here the standard ones for those architectures since our point is merely that all our theoretical results relying on $\bbeta_{\ell}=\mathbf{0},\bgamma_{\ell}=\mathbf{1}$ still applies to high performing models.

In addition to the above Imagenet results, we provide in Table~\ref{tab:learnable} the results for the classification accuracy on various dataset and with two architectures. This set of results simply demonstrates that the learnable parameters of BN ($\gamma,\beta$) have very little impact of performances.

\begin{table}[h]
    \caption{Test accuracy of various models when employing (yes) or not (no) the BN learnable parameters, as was demonstrated in the main paper, those parameters have very little impact on the final test accuracy (no data-augmentation is used).}
    \label{tab:learnable}
    \centering
    \begin{tabular}{|c|c|c|c|c|c|c|c|c|}\hline
    &\multicolumn{2}{|c|}{imagenette}&\multicolumn{2}{|c|}{cifar10}&\multicolumn{2}{|c|}{cifar100}&\multicolumn{2}{|c|}{svhn}\\ \hline 
    & No & Yes& No & Yes& No & Yes& No & Yes \\ \hline
    RESNET & 79.2 &78.8 &83.  &86.2 &50.  &54.8 &94.2 &95.3\\\hline
    CNN & 77.7 &77.6 &87.5 &87.6 &54.  &55.2 &96.  &95.9\\\hline
    \end{tabular}
\end{table}

We provide below the descriptions of the DN architectures. For the Residual Networks, the notation Resnet4-10 for example defines the width factor and depth of each block. We provide an example below for Resnet2-2.

\begin{center}
    Residual Network
\begin{verbatim}
Conv2D(layer[-1], 32, (5, 5), pad="SAME", b=None))
BatchNorm(layer[-1])
leaky_relu(layer[-1])
MaxPool2D(layer[-1], (2, 2))
Dropout(layer[-1], 0.9)

for width in [64, 128, 256, 512]:
    for repeat in range(2):
        Conv2D(layer[-1], width, (3, 3), b=None, pad="SAME")
        BatchNorm(layer[-1])
        leaky_relu(layer[-1])
        Conv2D(layer[-1], width, (3, 3), b=None, pad="SAME")
        BatchNorm(layer)
        if layer[-6].shape == layer[-1].shape:
            leaky_relu(layer[-1]) + layer[-6])
        else:
            leaky_relu(layer[-1]) 
                + Conv2D(layer[-6], width, (3, 3), 
                b=None, pad="SAME")

    AvgPool2D(layer[-1], (2, 2))
    Conv2D(layer[-1], 512, (1, 1), b=None))
    BatchNorm(layer)
    leaky_relu(layer[-1])

GlobalAvgPool2D(layer[-1])
Dense(layer[-1], N_CLASSES)
\end{verbatim}
\end{center}

We now describe the CNN model that we employed. Notice that if the considered dataset is imagenette or other (smaller spatial dimension) dataset there is an additional first layer of convolution plus spatial pooling to reduce the spatial dimensions of the feature maps.

\begin{center}
    Convolutional Network
\begin{verbatim}
Conv2D(layer[-1], 32, (5, 5), pad="SAME", b=None)
BatchNorm(layer)
leaky_relu(layer[-1]))
MaxPool2D(layer[-1], (2, 2))

if args.dataset == "imagenette":
    Conv2D(layer[-1], 64, (5, 5), pad="SAME", b=None)
    BatchNorm(layer)
    leaky_relu(layer[-1])
    MaxPool2D(layer[-1], (2, 2))

for k in range(3):
    Conv2D(layer[-1], 96, (5, 5), b=None, pad="SAME")
    BatchNorm(layer)
    leaky_relu(layer[-1])
    Conv2D(layer[-1], 96, (1, 1), b=None)
    BatchNorm(layer)
    leaky_relu(layer[-1])
    Conv2D(layer[-1], 96, (1, 1), b=None)
    BatchNorm(layer)
    leaky_relu(layer[-1])

Dropout(layer[-1], 0.7) 
MaxPool2D(layer[-1], (2, 2))

for k in range(3):
    Conv2D(layer[-1], 192, (5, 5), b=None, pad="SAME")
    BatchNorm(layer)
    leaky_relu(layer[-1])
    Conv2D(layer[-1], 192, (1, 1), b=None)
    BatchNorm(layer)
    leaky_relu(layer[-1])
    Conv2D(layer[-1], 192, (1, 1), b=None)
    BatchNorm(layer)
    leaky_relu(layer[-1])

Dropout(layer[-1], 0.7)
MaxPool2D(layer[-1], (2, 2))

Conv2D(layer[-1], 192, (3, 3), b=None)
BatchNorm(layer)
leaky_relu(layer[-1])
Conv2D(layer[-1], 192, (1, 1), b=None))
BatchNorm(layer)
leaky_relu(layer[-1])

GlobalAvgPool2D(layer[-1])
Dense(layer[-1], N_CLASSES)
\end{verbatim}
\end{center}

\section{Proofs}
\label{app:proofs}

We propose in this section the various proofs supporting the diverse theoretical claims from the main part of the paper.

\subsection{Proof of Theorem~\ref{thm:onelayer}}
\label{proof:thmonelayer}

\begin{proof}
In order to prove the theorem we will demonstrate below that the optimum of the total least square optimization problem is reached at the unique global optimum given by the average of the data, hence corresponding to the batch-normalization mean parameter. Then we demonstrate that at this minimum, the value of the total least square loss is given by the variance parameter of batch-normalization.

The optimization problem is given by
\begin{align}
    \mathcal{L}(\mu; Z) =& \sum_{k=1}^{D_{\ell}}\sum_{\bz \in Z} d\left(\bz,\mathcal{H}_{\ell,k}\right)^2
    =\sum_{k=1}^{D_{\ell}}\sum_{\bz \in Z}\frac{\left | \langle \bw_{\ell,k},\bz_{\ell-1}\rangle - \bmu_{k}\right |^2 }{\|\bw_{\ell,k}\|^2_2}
\end{align}
it is clear that the optimization problem
\begin{align}
    \min_{\mu \in \mathbb{R}^{D_{\ell}}} \mathcal{L}(\mu,Z),
\end{align}
can be decomposed into multiple independent optimization problem for each dimension of the vector $\mu$, since we are working with an unconstrained optimization problem with separable sum. We thus focus on a single $\bmu_{k}$ for now. The optimization problem becomes
\begin{align}
    \min_{\bmu_{k}\in\mathbb{R}}\sum_{\bz \in Z}\frac{\left | \langle \bw_{\ell,k},\bz_{\ell-1}\rangle - \bmu_{k}\right |^2 }{\|\bw_{\ell,k}\|^2_2}
\end{align}
taking the first derivative leads to
\begin{align}
    \partial \sum_{\bz \in Z}\frac{\left | \langle \bw_{\ell,k},\bz_{\ell-1}\rangle - \bmu_{k}\right |^2 }{\|\bw_{\ell,k}\|^2_2} =& -2\sum_{\bz \in Z}\frac{\left ( \langle \bw_{\ell,k},\bz_{\ell-1}\rangle - \bmu_{k}\right) }{\|\bw_{\ell,k}\|^2_2}\\
    =& -2\sum_{\bz \in Z}\frac{\langle \bw_{\ell,k},\bz_{\ell-1}\rangle }{\|\bw_{\ell,k}\|^2_2} + 2 Card(Z) \frac{ \bmu_{k}}{\|\bw_{\ell,k}\|^2_2}\\
\end{align}
the above first derivative of the total least square (quadratic) loss function is thus a linear function of $\bmu_{k}$ being $0$ at the unique point given by
\begin{align}
    -2\sum_{\bz \in Z}\frac{\langle \bw_{\ell,k},\bz_{\ell-1}\rangle }{\|\bw_{\ell,k}\|^2_2} + 2 Card(Z) \frac{ \bmu_{k}}{\|\bw_{\ell,k}\|^2_2}=0 \iff \bmu_{k} = \frac{\sum_{\bz \in Z}\langle \bw_{\ell,k},\bz_{\ell-1}\rangle}{Card(Z)}
\end{align}
confirming that the average of the pre-activation feature maps (per-dimension) is indeed the optimum of the optimization problem. One can verify easily that it is indeed a minimum by taking the second derivative of the total least square which indeed positive and given by
$
     \frac{ 2 Card(Z)}{\|\bw_{\ell,k}\|^2_2}$.
The above can be done for each dimension $k$ in a similar manner. Now, by inserting this optimal value back into the total least square loss, we obtain the desired result.
\end{proof}

\subsection{Proof of Central Hyperplane Arrangement}
\label{proof:centroid}

\begin{cor}
\label{cor:centroid}
BN constrains the input space partition boundaries $\partial \Omega_{\ell}$ of each layer $\ell$ of a BN-equipped DN to be a {\em central hyperplane arrangement}; indeed, the average of the layer's training data inputs
\begin{align}
\overline{\bz}_{\ell}
    \in 
    \bigcap_{k=1}^{D_{\ell}}\H_{\ell,k}
    \label{eq:chpa}
\end{align}
as long as $\|\bw_{\ell,k}\| >0$.
\end{cor}

\begin{proof}
In order to prove the desired result i.e. that there exists a nonempty intersection between all the hyperplanes, we first demonstrate that the layer input centroid $\overline{\bz}_{\ell-1}$ indeed to one hyperplane, say $k$. Then it will be direct to see that this holds regardless of $k$ and thus the intersection of all hyperplanes contains at least  $\overline{\bz}_{\ell-1}$ which is enough to prove the statement.

For a data point (in our case $\overline{\bz}_{\ell-1}$) to belong to the $k^{\rm th}$ (unit) hyperplane $\mathcal{H}_{\ell,k}$ of layer $\ell$, we must ensure that this point belong to the set of the hyperplane defined as (recall (\ref{eq:Hk}))
\begin{align}
    \mathcal{H}_{\ell,k}=\left\{\bz_{\ell-1} \in \mathbb{R}^{D_{\ell-1}} : \left\langle \bw_{\ell,k},\bz_{\ell-1}\right\rangle=[\mu_{\ell}]_{k}\right\},
\end{align}
in our case we can simply use the data centroid and ensure that it fulfils the hyperplane equality
\begin{align}
    \left\langle \bw_{\ell,k},\overline{\bz}_{\ell-1}\right\rangle=&\left\langle \bw_{\ell,k},\frac{\sum_{\bz \in Z}\bz}{Card(Z)}\right\rangle
    =\sum_{\bz \in Z}\frac{\left\langle \bw_{\ell,k},\bz\right \rangle}{Card(Z)}
    = [\mu^*_{\ell}]_k
\end{align}
where the last equation gives in fact the batch-normalization mean parameter. So now, recalling the equation of $\mathcal{H}_{\ell,k}$ we see that the point $\overline{\bz}_{\ell-1}$ makes plane projection $[\mu^*_{\ell}]_k$ which equals the bias of the hyperplane effectively making $\overline{\bz}_{\ell-1}$ part of the (batch-normalized) hyperplane $\mathcal{H}_{\ell,k}$. Doing the above for each $k \in D_{\ell}$ we see that the layer input centroid belongs to all the unit hyperplane that are shifted by the correct batch-normalization parameter, hence we directly obtain the desired result
\begin{align}
    \overline{\bz}_{\ell-1} \subset \bigcap_{k\in D_{\ell}} \mathcal{H}_{\ell,k},
\end{align}
concluding the proof.
\end{proof}

\subsection{Proof of Theorem~\ref{thm:multilayer}}

\begin{proof}

Define by $\bx^*$ the shortest point in $\mathcal{P}_{\ell,k}$ from $\bx$ defined by
\begin{equation*}
    \bx^* = \arg\min_{\bu \in \mathcal{P}_{\ell,k}}\|\bx - \bu\|_2.
\end{equation*}
The path from $\bx$ to $\bx^*$ is a straight line in the input space which we define by 
\begin{align}
    l(\theta) = \bx^* \theta + (1 - \theta) \bx, \theta \in [0,1],
\end{align}
s.t. $l(0)=\bx$, our original point, and $l(1)$ is the shortest point on the kinked hyperplane. Now, in the input space of layer $\ell$, this parametric line becomes a continuous piecewise affine parametric line defined as 
\begin{align}
    \bz_{\ell-1}(\theta)= (f_{\ell-1}\circ \dots \circ f_1)(l(\theta).
\end{align}

By definition, if $\mathcal{P}_{\ell,k}$ is brought closer to $\bx$, it means that $\exists \theta < 1$ s.t. $l(\theta)\in\mathcal{P}_{\ell,k}$. Similarly this can be defined in the layer input space as follows.

\[
\exists \theta'<1 \;\; s.t. \;\;\bz_{\ell-1}(\theta) \in \mathcal{H}_{\ell,k} \implies \exists \theta < 1\;\; s.t.\;\; l(\theta)\in\mathcal{P}_{\ell,k}
\]
this demonstrates that when moving the layer hyperplane s.t. it intersects the kinked path $\bz_{\ell-1}$ at a point $\bz_{\ell-1}(\theta')$ with $\theta'<1$, then the distance in the input space is also reduced. Now, the BN fitting is greedy and tried to minimize the length of the straight line between $\bz_{\ell-1}(0)$ a.k.a $\bz_{\ell-1}(\bx)$ and the hyperplane $\mathcal{H}_{\ell,k}$. However, notice that if the length of this straight line decreases by brining the hyperplane closer to $\bz_{\ell-1}(\bx)$ then this also decreases the $\theta'$ s.t. $\bz_{\ell-1}(\theta') \in \mathcal{H}_{\ll,k}$ in turn reducing the distance between $\bx$ and $\mathcal{P}_{\ell,k}$ in the DN input space, giving the desired (second) result.
Conversely, if $\bz_{\ell-1}(0) \in \mathcal{H}_{\ell,k}$ then the point $\bx$ lies in the zero-set of the unit, in turn making it belong to the kinked hyperplane $\mathcal{P}_{\ell,k}$ which corresponds to this exact set.

\end{proof}

\subsection{Proof of Proposition~\ref{prop:each_side}}
\label{proof:prop_each_side}

\begin{prop}
\label{prop:each_side}
Consider an $L$-layer DN configured to learn a binary classifier from the labeled training data $\X$ using a  leaky-ReLU activation function, arbitrary weights 
$\bW_{\ell}$ at all layers, BN at layers $1,\dots,L-1$, and layer $L$ configured as in  (\ref{eq:no_BN}) with $\bc_L=\mathbf{0}$.
Then, for any training mini-batch from $\X$, there will be at least one data point on either side of the decision boundary.
\end{prop}

\begin{proof}
When using leaky-ReLU the input to the last layer will have positive and negative values in each dimension for at least $1$ in the current minibatch. That means that each dimension will have at least $1$ negative value and all the other positive or vice-versa. As the last layer is initialized with zero bias, the decision boundary is defined in the last layer input space as the hyperplanes (or zero-set) of each output unit. Also, being on one side or the other of the decision boundary in the DN input space is equivalent to being on one side or the other of the linear decision boundary in the last layer input space. Combining those two results we obtain that at initialization, there has to be at least $1$ sample one side of the decision boundary and the others on the other side.
\end{proof}

\subsection{Proof of BN Statistics Variance}

\begin{proof}
Let's consider a random variable with $E(\bz)=\bm$ and $Cov(\bz)=\text{diag}(\rho^2)$. Then we directly have that 
\begin{align*}
    E(\langle \bw,\bz\rangle) =&\langle \bw,\bm\rangle\\
    Var(\langle \bw,\bz\rangle) =& E((\langle \bw,\bz\rangle-E(\langle \bw,\bz\rangle))^2)\\
    =&E((\langle \bw,\bz\rangle-\langle \bw,\bm\rangle)^2)\\
    =&E(\langle \bw,\bz-\bm\rangle^2)\\
    =&E\left(\sum_{d}\bw_d^2(\bz_d-\bm_d)^2 + \sum_{d\not = d'}\bw_d(\bz_d-\bm_d)\bw_{d'}(\bz_{d'}-\bm_{d'})\right)\\
    =&E\left(\sum_{d}\bw_d^2(\bz_d-\bm_d)^2 + \sum_{d\not = d'}\bw_d(\bz_d-\bm_d)\bw_{d'}(\bz_{d'}-\bm_{d'})\right)\\
    =&\sum_{d}\bw_d^2\rho^2_d\\
    =&\langle \bw^2,\rho^2_d\rangle
\end{align*}
now given the known variance and mean of the $\langle \bw,\bz\rangle$ random variable, the desired result is obtained by using the standard empirical variance estimator.
\end{proof}

\begin{figure}[t]
    \centering
    \begin{minipage}{0.022\linewidth}
    \rotatebox{90}{\hspace{0.5cm}test accuracy}
    \end{minipage}
    \begin{minipage}{0.2\linewidth}
        \centering
        \includegraphics[width=\linewidth,height=28mm]{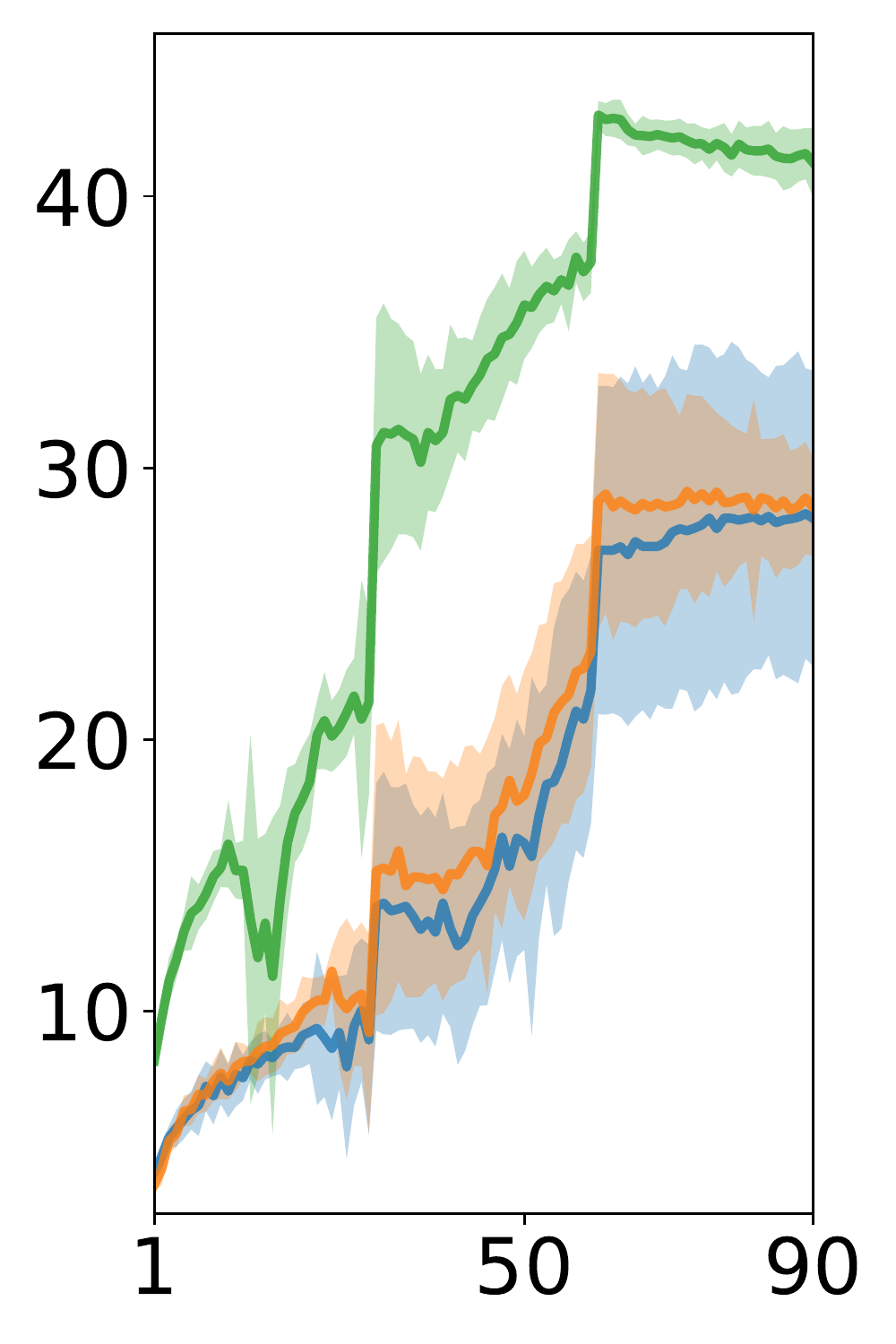}\\
        learning epochs
    \end{minipage}
    \begin{minipage}{0.76\linewidth}
        \caption{\small 
        Image classification using a Resnet9 on CIFAR100. 
        No BN or data augmentation was used during SGD training.
        Instead, the DN was initialized with random weights and zero bias (blue), random bias (orange), or via BN across the entire data set as in Figure~\ref{fig:backprop}
        (green).
        Each training was repeated $10$ times with learning rate cross-validation, and we plot the average test accuracy (of the best valid set learning rate) vs.\ learning epoch.  
        BN's ``smart initialization'' reaches a higher-performing solution faster because it provides SGD with a spline partition that is already adapted to the training dataset.
        %
        }
    \label{fig:test2}
    \end{minipage}
  \end{figure}

\section{Dataset Descriptions}

\textbf{MNIST}The MNIST database (Modified National Institute of Standards and Technology database) is a large database of handwritten digits that is commonly used for training various image processing systems. The database is also widely used for training and testing in the field of machine learning. It was created by "re-mixing" the samples from NIST's original datasets. The creators felt that since NIST's training dataset was taken from American Census Bureau employees, while the testing dataset was taken from American high school students, it was not well-suited for machine learning experiments. Furthermore, the black and white images from NIST were normalized to fit into a $28x28$ pixel bounding box and anti-aliased, which introduced grayscale levels.

The MNIST database contains $60,000$ training images and $10,000$ testing images. Half of the training set and half of the test set were taken from NIST's training dataset, while the other half of the training set and the other half of the test set were taken from NIST's testing dataset. The original creators of the database keep a list of some of the methods tested on it. In their original paper, they use a support-vector machine to get an error rate of 0.8\%.

\textbf{SVHN}SVHN is a real-world image dataset for developing machine learning and object recognition algorithms with minimal requirement on data preprocessing and formatting. It can be seen as similar in flavor to MNIST (e.g., the images are of small cropped digits), but incorporates an order of magnitude more labeled data (over $600,000$ digit images) and comes from a significantly harder, unsolved, real world problem (recognizing digits and numbers in natural scene images). SVHN is obtained from house numbers in Google Street View images.

\textbf{CIFAR10}The CIFAR-10 dataset (Canadian Institute For Advanced Research) is a collection of images that are commonly used to train machine learning and computer vision algorithms. It is one of the most widely used datasets for machine learning research. The CIFAR-10 dataset contains $60,000$ $32x32$ color images in $10$ different classes. The $10$ different classes represent airplanes, cars, birds, cats, deer, dogs, frogs, horses, ships, and trucks. There are 6,000 images of each class.

Computer algorithms for recognizing objects in photos often learn by example. CIFAR-10 is a set of images that can be used to teach a computer how to recognize objects. Since the images in CIFAR-10 are low-resolution $(32x32)$, this dataset can allow researchers to quickly try different algorithms to see what works. Various kinds of convolutional neural networks tend to be the best at recognizing the images in CIFAR-10.

\textbf{CIFAR100}This dataset is just like the CIFAR-10, except it has 100 classes containing 600 images each. There are 500 training images and 100 testing images per class. The 100 classes in the CIFAR-100 are grouped into 20 superclasses. Each image comes with a "fine" label (the class to which it belongs) and a "coarse" label (the superclass to which it belongs).

\end{document}



\maketitle

\SItext

We provide in the following appendices details on the Max-Affine Spline formulation of DNs (Sec.~\ref{app:details}) and the proofs of the various theoretical results from the main text (Sec.~\ref{app:proofs}).

\section{Details on Continuous Piecewise Affine Deep Networks}
\label{app:details}

The goal of this section is to provide additional details into the forming of the per-region affine mappings of CPA DNs.

As mentioned in the main text, any DN that is formed from CPA nonlinearities can be expressed itself as a CPA operator. The per-region affine mappings are thus entirely defined by the {\em state} of the DN nonlinearities. For an activation function such as ReLU, leaky-ReLU or absolute value, the nonlinearity {\em state} is completely determined by the sign of the activation input as it determines which of the two linear mapping to apply to produce its output. Let denote this code as $\bq_{\ell}(\bz_{\ell-1}) \in \{\alpha, 1\}^{D_{\ell}}$ given by
\begin{align}
    [\bq_{\ell}(\bz_{\ell-1})]_i = \begin{cases}
    \alpha, & [\bW_{\ell}\bz_{\ell-1}+\bb_{\ell}]_i \leq 0\\
    1, & [\bW_{\ell}\bz_{\ell-1}+\bb_{\ell}]_i > 0
    \end{cases}
\end{align}
where the pre-activation formula above can be replaced with the one from (\ref{eq:BN}) if BN is employed. 
For a max-pooling type of nonlinearity the state corresponds to the argmax obtained in each pooling regions, for details on how to generalize the below in that case we refer the reader to \cite{balestriero2018spline}.
Based on the above, the layer input-output mapping can be written as
\begin{align}
   \bz_{\ell}=\bQ_{\ell}(\bz_{\ell-1})(W_{\ell}\bz_{\ell-1}+\bb_{\ell}).
\end{align}
where $\bQ_{\ell}$ produces a diagonal matrix from the vector $\bq_{\ell}$, and one has $\alpha=0$ for ReLU, $\alpha=-1$ for absolute value and $\alpha>0$ for leaky-ReLU; see \cite{balestriero2018spline} for additional details.
The up-to-layer-$\ell$ mapping can thus be easily written as 
\begin{align}
    \bz_{\ell}=A_{1|\ell}(\bx)\bx+B_{1|\ell}(\bx)
\end{align}
with the following slope and bias parameters
\begin{align}
     A_{1|\ell}(\bx) = & \bQ_{\ell}\bW_{\ell}\bQ_{\ell-1}\bW_{\ell-1} \dots \bQ_{1}\bW_{1},\\
     B_{1|\ell}(\bx) = & \sum_{i=1}^{\ell}\left( \bQ_{\ell}\bW_{\ell}\bQ_{\ell-1}\bW_{\ell-1}\dots \bQ_{i+1}\bW_{i+1}\right)\bb_{i},
    \end{align}
where for clarity we abbreviated $\bQ_{\ell}(\bz_{\ell-1})$ as $\bQ_{\ell}$.

From the above formulation, it is clear that whenever the codes $\bq_{\ell}$ stay the same for different inputs, the layer input-output mapping remains linear. This defines a region $\omega_{\ell}$ of $\Omega_{\ell}$ from (\ref{eq:boundary}), the layer-input-space partition region, as
\begin{align}
     \omega^{\bq}_{\ell} = \{\bz_{\ell-1} \in \mathbb{R}^{D_{\ell-1}}: \bq_{\ell}(\bz_{\ell-1})=\bq\}, \bq \in \{\alpha, 1\}^{D_l}.
\end{align}
In a similar way, the up-to-layer-$\ell$ input space partition region can be defined.

The multilayer region is defined as 
\begin{align}
     \omega^{\bq_1,\dots,\bq_\ell}_{1|\ell} = \bigcap_{i=1}^{\ell}\{\bx \in \mathbb{R}^{D}: \bq_{i}(\bx)=\bq_i\}, \bq_i \in \{\alpha, 1\}^{D_l}.
\end{align}

\begin{defn}[Layer/DN partition]
\label{def:partition}
The layer $\ell$ (resp. -up-to-layer-$\ell$) input space partition is given by
\begin{align}
    \Omega_{\ell} &= \{\omega_{\ell}^{\bq}, \bq\in \{\alpha, 1\}^{D_l}\}\setminus \emptyset,\\
    \Omega_{1|\ell} &= \{\omega^{\bq_1,\dots,\bq_\ell}_{1|\ell}, \bq_i\in \{\alpha, 1\}^{D_i}, \forall i\}\setminus \emptyset.
\end{align}
\end{defn}
Note that $\Omega_{1|L}$ forms the entire DN input space partition. 
Both unit and layer input space partitioning can be rewritten as Power Diagrams, a generalization of Voronoi Diagrams \cite{balestriero2019geometry}. 
Composing layers then simply refines successively the previously build input space partitioning via a subdivision process to obtain the -up to layer $\ell$- input space partitioning $\Omega_{|\ell}$.

\section[Batch normalization folds the partition regions towards the \\ training data]{Batch normalization parameter $\bsigma$ folds the partition regions towards the training data}
\label{sec:sigma}

We are now in a position to describe the effect of the BN parameter $\bsigma_\ell$, which has no effect on the spline partition of an individual DN layer but comes into play for a composition of two or more layers.
In contrast to the hyperplane translation effected by $\bmu_\ell$, $\bsigma_\ell$ optimizes the {\em dihedral angles} between adjacent facets of the folded hyperplanes in subsequent layers in order to swing them closer to the training data.

Define $\bQ_\ell$ as the square diagonal matrix whose diagonal entry $i$ is determined by the sign of the pre-activation $h_{\ell,i}$.
That is,
\begin{equation}
[\bQ_\ell]_{i,i} = 
\left\{
\begin{array}{ll}
 \alpha/\sigma_{\ell,i}, \quad & h_{\ell,i} < 0  \\[1mm]
  1/\sigma_{\ell,i},   & h_{\ell,i} \geq 0,
\end{array}
\right.
\label{eq:Q}
\end{equation}
with $\alpha=0$ for ReLU, $\alpha>0$ for leaky-ReLU, and $\alpha= -1$ for absolute value 
(recall the definition of the activation function from Section~\ref{sec:spline1}; see \cite{balestriero2018from} for additional nonlinearities).
Note that $\bQ_\ell$ is constant across each region $\omega$ in the layer's spline partition $\Omega_\ell$ since, by definition, none of the pre-activations $h_{\ell,i}$ change sign within region $\omega$.
We will use $\bQ_\ell(\omega)$ to denote the dependency on the region $\omega$; to compute $\bQ_\ell(\omega)$ given $\omega$, one merely samples a layer input from $\omega$, computes the pre-activations $h_{\ell,i}$, and applies (\ref{eq:Q}).

In Appendix~\ref{XXX} 
we prove that the BN parameter $\bsigma_{\ell}$ adjusts the dihedral folding angles of adjacent facets of each folded hyperplane created by layer $\ell+1$ in order to align the facets with the training data.  
Figure~\ref{fig:evolution_boundary}(c) illustrates empirically how $\bsigma_1$ folds the facets of $\F_{2,k}$ realized by the second layer of a toy DN.
Since the relevant expressions quickly (but predictably) grow in length with the number of layers, to expose the salient points, but without loss of generality, we will focus the next theorem on the composition of the first two DN layers (layers $\ell=1,2$). 
In this case, there are two geometric quantities of interest that combine to create the input space spline partition $\Omega=\Omega_{|2}$: layer 1's hyperplanes $\H_{1,i}$ and layer 2's folded hyperplanes $\F_{2,k}$.

\begin{thm}
\label{thm:sigma}
Given a 2-layer ($\ell=1,2$) BN-equipped DN employing a leaky-ReLU or absolute value activation function, 
consider two adjacent regions $\omega,\omega'$ from the spline partition $\Omega$ whose boundaries contain the facets $\F_{2,k,\omega},\F_{2,k,\omega'}$ created by folding across the boundaries' shared hyperplane $\H_{1,i}$ (see Figure~\ref{fig:Thm3}).
The dihedral angles between these three (partial) hyperplanes are given by:
\begin{align}
   \theta \left(\F_{2,k,\omega},\H_{1,i} \right)
   &=
   \arccos\left(
   {\frac{\left| \bw_{2,k}^\top\,\bQ_1(\omega)\bW_1\bw_{1,i} \right|}
   {\left\|\bw_{2,k}^\top\,\bQ_1(\omega)\bW_1 \right\| \; \left\| \bw_{1,i} \right\|}}\right),
   \label{eq:Qangle1} 
   \\[2mm]
   \theta \left(\F_{2,k,\omega'},\H_{1,i} \right)
   &=
   \arccos\left(
   {\frac{\left| \bw_{2,k}^\top\,\bQ_1(\omega')\bW_1\bw_{1,i} \right|}
   {\left\|\bw_{2,k}^\top\,\bQ_1(\omega')\bW_1 \right\| \; \left\| \bw_{1,i} \right\|}}\right),
   \label{eq:Qangle2}
   \\[2mm]
    \theta \left(\F_{2,k,\omega},\F_{2,k,\omega'} \right)
    &=
   \arccos\left(
   {\frac{\left| \bw_{2,k}^\top\,\bQ_1(\omega)\bW_1\bW_1^\top\bQ_1(\omega')\bw_{2,k} \right|}
   {\left\|\bw_{2,k}^\top\,\bQ_1(\omega)\bW_1 \right\| \; \left\| \bw_{2,k}^\top\,\bQ_1(\omega')\bW_1  \right\|}}\right).
   \label{eq:Qangle3}
\end{align}
\end{thm}

\begin{figure}[t]
\centering
\includegraphics[width=50mm]{images/Theorem3-in-2D.jpg}
\caption{\small Sketch of the situation in Theorem 3 for a two-dimensional input space.}
\label{fig:Thm3}
\end{figure}

In (\ref{eq:Qangle1})--(\ref{eq:Qangle3}), $\bQ_1(\omega)$ and $\bQ_1(\omega')$ differ by only one diagonal entry at index $(i,i)$: one matrix takes the value $\frac{1}{\sigma_{1,i}}$ and the other the value $\frac{\alpha}{\sigma_{1,i}}$, as per Eq.~\ref{eq:Q}.
%
Since 
(\ref{eq:optimization-k}) implies that $\sigma^2_{\ell,i}\propto \min_{\bmu_{\ell,i}}\mathcal{L}(\bmu_{\ell,i},\B_{\ell})$, we have the following two insights that we state for $\ell=1$.
(These results extend in a straightforward fashion for more than two layers as well as for more complicated piecewise linear activation functions.)

On the one hand, if $\H_{1,i}$ is well-aligned with the training data $\B_1$, then the TLS error, and hence $\sigma^2_{1,i}$
, will be small.
For the absolute value activation function, formulae (\ref{eq:Qangle1}) and (\ref{eq:Qangle2}) then tell us that both $\theta \left(\F_{2,k,\omega},\H_{1,i} \right)\approx 0$ and $\theta \left(\F_{2,k,\omega'},\H_{1,i} \right)\approx 0$, meaning that $\F_{2,k,\omega}$ and $\F_{2,k,\omega'}$ will be folded to closely align with $\H_{1,i}$ (and hence the data). The connection between $\sigma^2_{\ell,i}$ and (\ref{eq:Qangle1},\ref{eq:Qangle2}) lies in the entries of the $\bQ$ matrix. Basically, the $\bsigma_{\ell}$ are used in the denominator of $\bQ$ and thus bend more or less the angles.
For the ReLU/leaky-ReLU activation function, either $\F_{2,k,\omega}$ or $\F_{2,k,\omega'}$ will be folded to closely align with $\H_{1,i}$; the other facet will be unchanged/mildly folded.

On the other hand, if $\H_{1,i}$ is not well-aligned with the training data $\B_1$, then the TLS error, and hence $\sigma^2_{1,i}$, will be large.
This will force $\bQ_1(\omega)\approx \bQ_1(\omega')$ and thus $\theta \left(\F_{2,k,\omega},\F_{2,k,\omega'} \right)\approx \pi$, meaning that a poorly aligned layer-1 hyperplane $H_{1,i}$ will not appreciably fold intersecting layer-2 facets.

Figure~\ref{fig:distances} illustrates empirically how the BN parameter $\sigma_{\ell,k}$
measures the quality of the fit of the (folded) hyperplanes to the data in the TLS error sense for a toy DN.

In summary, and extrapolating to the general case, the effect of the BN parameter $\bsigma_\ell$ is to fold the layer-$(\ell+1)$ hyperplanes 
(also the $\ell+2$ and subsequent hyperplanes)
that contribute to the spline partition boundary $\Omega$ in order to align them with the layer-$\ell$ hyperplanes that match the data well.
Hence, not only $\bmu_{\ell}$ but also $\bsigma_{\ell}$ plays a crucial role in aligning the DN spline partition with the training data (recall Figure~\ref{fig:2d_partition_bn}).

\begin{figure}[t!]
    \centering
    \begin{minipage}{0.2\linewidth}
    \centering
    $\H_{1,k}$\\
    \includegraphics[width=\linewidth]{images/coloring/distance_coloring_0_4.pdf}
    \end{minipage}
    \begin{minipage}{0.2\linewidth}
    \centering
    $\F_{4,k}$\\
    \includegraphics[width=\linewidth]{images/coloring/distance_coloring_3_4.pdf}
    \end{minipage}
    \begin{minipage}{0.58\linewidth}
    \caption{\small 
    Layer-1 hyperplanes $\H_{1,k}$ (left) and layer-4 folded hyperplanes $\F_{4,k}$ (right) depicted in the 2D input space of a toy 4-layer DN trained on the data points denoted with black dots.  The (folded) hyperplanes are colored based on the corresponding value $\sigma^2_{\ell,k}/\|\bw_{\ell,k}\|^2$, which is proportional to the total least squares (TLS) fitting error to the data (blue: small error, close to the data points; green: large error, far from the data points).
}
    \label{fig:distances}
    \end{minipage}
\end{figure}

\section[Role of the BN Parameters]{Role of the BN Parameters $\bbeta$, $\bgamma$ and Proof of Proposition~\ref{prop:redundant}}
\label{sec:betagamma}

Without loss of generality, consider a simple two-layer DN to illustrate. Then, it is clear that $\bgamma_1$ simply rescales the rows of $\bW_2$ and the entries of $\bbeta_1$
\begin{align}
\bz_{2,r}
&=
a\!\left(
\frac
{
\sum_{k=1}^{D_1}\: [\bw_{2,r}]_k \: a\!\left(\frac{\langle \bw_{1,k},\bx\rangle-\mu_{1,k}}{\sigma_{1,k}}\,\gamma_{1,k}+\beta_{1,k}\right)
-\mu_{2,r}
}
{
\sigma_{2,r}
}
\:
\gamma_{2,r}
+
\beta_{2,r}
\right) 
\\
&=
a\!\left(
\frac
{
\sum_{k=1}^{D_1}\: \gamma_{1,k} \, [\bw_{2,r}]_k \: a\!\left(\frac{\langle \bw_{1,k},\bx\rangle-\mu_{1,k}}{\sigma_{1,k}} + \frac{\beta_{1,k}}{\gamma_{1,k}}
\right)
-\mu_{2,r}
}
{
\sigma_{2,r}
}
\:
\gamma_{2,r}
+
\beta_{2,r}
\right) 
\label{eq:pullout}
    \\[2mm]
&=
a\!\left(
\frac
{
\sum_{k=1}^{D_1}\: [\bw'_{2,r}]_k \: a\!\left(\frac{\langle \bw_{1,k},\bx\rangle-\mu_{1,k}}{\sigma_{1,k}} + \beta'_{1,k}
\right)
-\mu_{2,r}
}
{
\sigma_{2,r}
}
\:
\gamma_{2,r}
+
\beta_{2,r}
\right)  
\end{align}
and so does not need to be optimized separately.
Here, $[\bw_{2,r}]_k$ denotes the $k$-th entry of the $r$-th row of $\bW_2$.
We obtain (\ref{eq:pullout}), because standard activation and pooling functions (e.g., (leaky-)ReLU, absolute value, max-pooling) are such that $a(cu)=c\,a(u)$.
This leaves $\bbeta_{\ell}$ as the only learnable parameter that needs to be considered.

Now, our BN theoretical analysis relies on setting $\bbeta_{\ell}=\mathbf{0}$, which corresponds to the standard initialization of BN. In this setting, we saw that  BN ``fits'' the partition boundaries to the data samples exactly. Now, by observing the form of (\ref{eq:BN}), it is clear that learning $\bbeta_{\ell}$ enables to ``undo'' this fitting if needed to better solve the task-at-hand. However, we have found that in most practical scenarios, fixing $\bbeta_{\ell}=\mathbf{0}$ throughout training actually does not alter final performances. In fact, on Imagenet, the top1 accuracy of a Resnet18 goes from 67.60\% to 65.93\% when doing such a change, and on a Resnet34, from 68.71\% to 67.21\%. The drop seems to remain the same even on more complicated architecture as for a Resnet50, the top1 test accuracy goes down from 77.11\% to 74.98\%, where again we emphasize that the exact same hyper-parameters are employed for both situations i.e. the drop could potentially be reduced by tuning the hyper-parameters.
Obviously those numbers might vary depending on optimizers and other hyper-parameters, we used here the standard ones for those architectures since our point is merely that all our theoretical results relying on $\bbeta_{\ell}=\mathbf{0},\bgamma_{\ell}=\mathbf{1}$ still applies to high performing models.

In addition to the above Imagenet results, we provide in Table~\ref{tab:learnable} the results for the classification accuracy on various dataset and with two architectures. This set of results simply demonstrates that the learnable parameters of BN ($\gamma,\beta$) have very little impact of performances.

\begin{table}[h]
    \caption{Test accuracy of various models when employing (yes) or not (no) the BN learnable parameters, as was demonstrated in the main paper, those parameters have very little impact on the final test accuracy (no data-augmentation is used).}
    \label{tab:learnable}
    \centering
    \begin{tabular}{|c|c|c|c|c|c|c|c|c|}\hline
    &\multicolumn{2}{|c|}{imagenette}&\multicolumn{2}{|c|}{cifar10}&\multicolumn{2}{|c|}{cifar100}&\multicolumn{2}{|c|}{svhn}\\ \hline 
    & No & Yes& No & Yes& No & Yes& No & Yes \\ \hline
    RESNET & 79.2 &78.8 &83.  &86.2 &50.  &54.8 &94.2 &95.3\\\hline
    CNN & 77.7 &77.6 &87.5 &87.6 &54.  &55.2 &96.  &95.9\\\hline
    \end{tabular}
\end{table}

We provide below the descriptions of the DN architectures. For the Residual Networks, the notation Resnet4-10 for example defines the width factor and depth of each block. We provide an example below for Resnet2-2.

\begin{center}
    Residual Network
\begin{verbatim}
Conv2D(layer[-1], 32, (5, 5), pad="SAME", b=None))
BatchNorm(layer[-1])
leaky_relu(layer[-1])
MaxPool2D(layer[-1], (2, 2))
Dropout(layer[-1], 0.9)

for width in [64, 128, 256, 512]:
    for repeat in range(2):
        Conv2D(layer[-1], width, (3, 3), b=None, pad="SAME")
        BatchNorm(layer[-1])
        leaky_relu(layer[-1])
        Conv2D(layer[-1], width, (3, 3), b=None, pad="SAME")
        BatchNorm(layer)
        if layer[-6].shape == layer[-1].shape:
            leaky_relu(layer[-1]) + layer[-6])
        else:
            leaky_relu(layer[-1]) 
                + Conv2D(layer[-6], width, (3, 3), 
                b=None, pad="SAME")

    AvgPool2D(layer[-1], (2, 2))
    Conv2D(layer[-1], 512, (1, 1), b=None))
    BatchNorm(layer)
    leaky_relu(layer[-1])

GlobalAvgPool2D(layer[-1])
Dense(layer[-1], N_CLASSES)
\end{verbatim}
\end{center}

We now describe the CNN model that we employed. Notice that if the considered dataset is imagenette or other (smaller spatial dimension) dataset there is an additional first layer of convolution plus spatial pooling to reduce the spatial dimensions of the feature maps.

\begin{center}
    Convolutional Network
\begin{verbatim}
Conv2D(layer[-1], 32, (5, 5), pad="SAME", b=None)
BatchNorm(layer)
leaky_relu(layer[-1]))
MaxPool2D(layer[-1], (2, 2))

if args.dataset == "imagenette":
    Conv2D(layer[-1], 64, (5, 5), pad="SAME", b=None)
    BatchNorm(layer)
    leaky_relu(layer[-1])
    MaxPool2D(layer[-1], (2, 2))

for k in range(3):
    Conv2D(layer[-1], 96, (5, 5), b=None, pad="SAME")
    BatchNorm(layer)
    leaky_relu(layer[-1])
    Conv2D(layer[-1], 96, (1, 1), b=None)
    BatchNorm(layer)
    leaky_relu(layer[-1])
    Conv2D(layer[-1], 96, (1, 1), b=None)
    BatchNorm(layer)
    leaky_relu(layer[-1])

Dropout(layer[-1], 0.7) 
MaxPool2D(layer[-1], (2, 2))

for k in range(3):
    Conv2D(layer[-1], 192, (5, 5), b=None, pad="SAME")
    BatchNorm(layer)
    leaky_relu(layer[-1])
    Conv2D(layer[-1], 192, (1, 1), b=None)
    BatchNorm(layer)
    leaky_relu(layer[-1])
    Conv2D(layer[-1], 192, (1, 1), b=None)
    BatchNorm(layer)
    leaky_relu(layer[-1])

Dropout(layer[-1], 0.7)
MaxPool2D(layer[-1], (2, 2))

Conv2D(layer[-1], 192, (3, 3), b=None)
BatchNorm(layer)
leaky_relu(layer[-1])
Conv2D(layer[-1], 192, (1, 1), b=None))
BatchNorm(layer)
leaky_relu(layer[-1])

GlobalAvgPool2D(layer[-1])
Dense(layer[-1], N_CLASSES)
\end{verbatim}
\end{center}

\section{Proofs}
\label{app:proofs}

We propose in this section the various proofs supporting the diverse theoretical claims from the main part of the paper.

\subsection{Proof of Theorem~\ref{thm:onelayer}}
\label{proof:thmonelayer}

\begin{proof}
In order to prove the theorem we will demonstrate below that the optimum of the total least square optimization problem is reached at the unique global optimum given by the average of the data, hence corresponding to the batch-normalization mean parameter. Then we demonstrate that at this minimum, the value of the total least square loss is given by the variance parameter of batch-normalization.

The optimization problem is given by
\begin{align}
    \mathcal{L}(\mu; Z) =& \sum_{k=1}^{D_{\ell}}\sum_{\bz \in Z} d\left(\bz,\mathcal{H}_{\ell,k}\right)^2
    =\sum_{k=1}^{D_{\ell}}\sum_{\bz \in Z}\frac{\left | \langle \bw_{\ell,k},\bz_{\ell-1}\rangle - \bmu_{k}\right |^2 }{\|\bw_{\ell,k}\|^2_2}
\end{align}
it is clear that the optimization problem
\begin{align}
    \min_{\mu \in \mathbb{R}^{D_{\ell}}} \mathcal{L}(\mu,Z),
\end{align}
can be decomposed into multiple independent optimization problem for each dimension of the vector $\mu$, since we are working with an unconstrained optimization problem with separable sum. We thus focus on a single $\bmu_{k}$ for now. The optimization problem becomes
\begin{align}
    \min_{\bmu_{k}\in\mathbb{R}}\sum_{\bz \in Z}\frac{\left | \langle \bw_{\ell,k},\bz_{\ell-1}\rangle - \bmu_{k}\right |^2 }{\|\bw_{\ell,k}\|^2_2}
\end{align}
taking the first derivative leads to
\begin{align}
    \partial \sum_{\bz \in Z}\frac{\left | \langle \bw_{\ell,k},\bz_{\ell-1}\rangle - \bmu_{k}\right |^2 }{\|\bw_{\ell,k}\|^2_2} =& -2\sum_{\bz \in Z}\frac{\left ( \langle \bw_{\ell,k},\bz_{\ell-1}\rangle - \bmu_{k}\right) }{\|\bw_{\ell,k}\|^2_2}\\
    =& -2\sum_{\bz \in Z}\frac{\langle \bw_{\ell,k},\bz_{\ell-1}\rangle }{\|\bw_{\ell,k}\|^2_2} + 2 Card(Z) \frac{ \bmu_{k}}{\|\bw_{\ell,k}\|^2_2}\\
\end{align}
the above first derivative of the total least square (quadratic) loss function is thus a linear function of $\bmu_{k}$ being $0$ at the unique point given by
\begin{align}
    -2\sum_{\bz \in Z}\frac{\langle \bw_{\ell,k},\bz_{\ell-1}\rangle }{\|\bw_{\ell,k}\|^2_2} + 2 Card(Z) \frac{ \bmu_{k}}{\|\bw_{\ell,k}\|^2_2}=0 \iff \bmu_{k} = \frac{\sum_{\bz \in Z}\langle \bw_{\ell,k},\bz_{\ell-1}\rangle}{Card(Z)}
\end{align}
confirming that the average of the pre-activation feature maps (per-dimension) is indeed the optimum of the optimization problem. One can verify easily that it is indeed a minimum by taking the second derivative of the total least square which indeed positive and given by
$
     \frac{ 2 Card(Z)}{\|\bw_{\ell,k}\|^2_2}$.
The above can be done for each dimension $k$ in a similar manner. Now, by inserting this optimal value back into the total least square loss, we obtain the desired result.
\end{proof}

\subsection{Proof of Central Hyperplane Arrangement}
\label{proof:centroid}

\begin{cor}
\label{cor:centroid}
BN constrains the input space partition boundaries $\partial \Omega_{\ell}$ of each layer $\ell$ of a BN-equipped DN to be a {\em central hyperplane arrangement}; indeed, the average of the layer's training data inputs
\begin{align}
\overline{\bz}_{\ell}
    \in 
    \bigcap_{k=1}^{D_{\ell}}\H_{\ell,k}
    \label{eq:chpa}
\end{align}
as long as $\|\bw_{\ell,k}\| >0$.
\end{cor}

\begin{proof}
In order to prove the desired result i.e. that there exists a nonempty intersection between all the hyperplanes, we first demonstrate that the layer input centroid $\overline{\bz}_{\ell-1}$ indeed to one hyperplane, say $k$. Then it will be direct to see that this holds regardless of $k$ and thus the intersection of all hyperplanes contains at least  $\overline{\bz}_{\ell-1}$ which is enough to prove the statement.

For a data point (in our case $\overline{\bz}_{\ell-1}$) to belong to the $k^{\rm th}$ (unit) hyperplane $\mathcal{H}_{\ell,k}$ of layer $\ell$, we must ensure that this point belong to the set of the hyperplane defined as (recall (\ref{eq:Hk}))
\begin{align}
    \mathcal{H}_{\ell,k}=\left\{\bz_{\ell-1} \in \mathbb{R}^{D_{\ell-1}} : \left\langle \bw_{\ell,k},\bz_{\ell-1}\right\rangle=[\mu_{\ell}]_{k}\right\},
\end{align}
in our case we can simply use the data centroid and ensure that it fulfils the hyperplane equality
\begin{align}
    \left\langle \bw_{\ell,k},\overline{\bz}_{\ell-1}\right\rangle=&\left\langle \bw_{\ell,k},\frac{\sum_{\bz \in Z}\bz}{Card(Z)}\right\rangle
    =\sum_{\bz \in Z}\frac{\left\langle \bw_{\ell,k},\bz\right \rangle}{Card(Z)}
    = [\mu^*_{\ell}]_k
\end{align}
where the last equation gives in fact the batch-normalization mean parameter. So now, recalling the equation of $\mathcal{H}_{\ell,k}$ we see that the point $\overline{\bz}_{\ell-1}$ makes plane projection $[\mu^*_{\ell}]_k$ which equals the bias of the hyperplane effectively making $\overline{\bz}_{\ell-1}$ part of the (batch-normalized) hyperplane $\mathcal{H}_{\ell,k}$. Doing the above for each $k \in D_{\ell}$ we see that the layer input centroid belongs to all the unit hyperplane that are shifted by the correct batch-normalization parameter, hence we directly obtain the desired result
\begin{align}
    \overline{\bz}_{\ell-1} \subset \bigcap_{k\in D_{\ell}} \mathcal{H}_{\ell,k},
\end{align}
concluding the proof.
\end{proof}

\subsection{Proof of Theorem~\ref{thm:multilayer}}

\begin{proof}

Define by $\bx^*$ the shortest point in $\mathcal{P}_{\ell,k}$ from $\bx$ defined by
\begin{equation*}
    \bx^* = \arg\min_{\bu \in \mathcal{P}_{\ell,k}}\|\bx - \bu\|_2.
\end{equation*}
The path from $\bx$ to $\bx^*$ is a straight line in the input space which we define by 
\begin{align}
    l(\theta) = \bx^* \theta + (1 - \theta) \bx, \theta \in [0,1],
\end{align}
s.t. $l(0)=\bx$, our original point, and $l(1)$ is the shortest point on the kinked hyperplane. Now, in the input space of layer $\ell$, this parametric line becomes a continuous piecewise affine parametric line defined as 
\begin{align}
    \bz_{\ell-1}(\theta)= (f_{\ell-1}\circ \dots \circ f_1)(l(\theta).
\end{align}

By definition, if $\mathcal{P}_{\ell,k}$ is brought closer to $\bx$, it means that $\exists \theta < 1$ s.t. $l(\theta)\in\mathcal{P}_{\ell,k}$. Similarly this can be defined in the layer input space as follows.

\[
\exists \theta'<1 \;\; s.t. \;\;\bz_{\ell-1}(\theta) \in \mathcal{H}_{\ell,k} \implies \exists \theta < 1\;\; s.t.\;\; l(\theta)\in\mathcal{P}_{\ell,k}
\]
this demonstrates that when moving the layer hyperplane s.t. it intersects the kinked path $\bz_{\ell-1}$ at a point $\bz_{\ell-1}(\theta')$ with $\theta'<1$, then the distance in the input space is also reduced. Now, the BN fitting is greedy and tried to minimize the length of the straight line between $\bz_{\ell-1}(0)$ a.k.a $\bz_{\ell-1}(\bx)$ and the hyperplane $\mathcal{H}_{\ell,k}$. However, notice that if the length of this straight line decreases by brining the hyperplane closer to $\bz_{\ell-1}(\bx)$ then this also decreases the $\theta'$ s.t. $\bz_{\ell-1}(\theta') \in \mathcal{H}_{\ll,k}$ in turn reducing the distance between $\bx$ and $\mathcal{P}_{\ell,k}$ in the DN input space, giving the desired (second) result.
Conversely, if $\bz_{\ell-1}(0) \in \mathcal{H}_{\ell,k}$ then the point $\bx$ lies in the zero-set of the unit, in turn making it belong to the kinked hyperplane $\mathcal{P}_{\ell,k}$ which corresponds to this exact set.

\end{proof}

\subsection{Proof of Proposition~\ref{prop:each_side}}
\label{proof:prop_each_side}

\begin{prop}
\label{prop:each_side}
Consider an $L$-layer DN configured to learn a binary classifier from the labeled training data $\X$ using a  leaky-ReLU activation function, arbitrary weights 
$\bW_{\ell}$ at all layers, BN at layers $1,\dots,L-1$, and layer $L$ configured as in  (\ref{eq:no_BN}) with $\bc_L=\mathbf{0}$.
Then, for any training mini-batch from $\X$, there will be at least one data point on either side of the decision boundary.
\end{prop}

\begin{proof}
When using leaky-ReLU the input to the last layer will have positive and negative values in each dimension for at least $1$ in the current minibatch. That means that each dimension will have at least $1$ negative value and all the other positive or vice-versa. As the last layer is initialized with zero bias, the decision boundary is defined in the last layer input space as the hyperplanes (or zero-set) of each output unit. Also, being on one side or the other of the decision boundary in the DN input space is equivalent to being on one side or the other of the linear decision boundary in the last layer input space. Combining those two results we obtain that at initialization, there has to be at least $1$ sample one side of the decision boundary and the others on the other side.
\end{proof}

\subsection{Proof of BN Statistics Variance}

\begin{proof}
Let's consider a random variable with $E(\bz)=\bm$ and $Cov(\bz)=\text{diag}(\rho^2)$. Then we directly have that 
\begin{align*}
    E(\langle \bw,\bz\rangle) =&\langle \bw,\bm\rangle\\
    Var(\langle \bw,\bz\rangle) =& E((\langle \bw,\bz\rangle-E(\langle \bw,\bz\rangle))^2)\\
    =&E((\langle \bw,\bz\rangle-\langle \bw,\bm\rangle)^2)\\
    =&E(\langle \bw,\bz-\bm\rangle^2)\\
    =&E\left(\sum_{d}\bw_d^2(\bz_d-\bm_d)^2 + \sum_{d\not = d'}\bw_d(\bz_d-\bm_d)\bw_{d'}(\bz_{d'}-\bm_{d'})\right)\\
    =&E\left(\sum_{d}\bw_d^2(\bz_d-\bm_d)^2 + \sum_{d\not = d'}\bw_d(\bz_d-\bm_d)\bw_{d'}(\bz_{d'}-\bm_{d'})\right)\\
    =&\sum_{d}\bw_d^2\rho^2_d\\
    =&\langle \bw^2,\rho^2_d\rangle
\end{align*}
now given the known variance and mean of the $\langle \bw,\bz\rangle$ random variable, the desired result is obtained by using the standard empirical variance estimator.
\end{proof}

\begin{figure}[t]
    \centering
    \begin{minipage}{0.022\linewidth}
    \rotatebox{90}{\hspace{0.5cm}test accuracy}
    \end{minipage}
    \begin{minipage}{0.2\linewidth}
        \centering
        \includegraphics[width=\linewidth,height=28mm]{images/pretrain/accuracy_CIFAR100_resnet.pdf}\\
        learning epochs
    \end{minipage}
    \begin{minipage}{0.76\linewidth}
        \caption{\small 
        Image classification using a Resnet9 on CIFAR100. 
        No BN or data augmentation was used during SGD training.
        Instead, the DN was initialized with random weights and zero bias (blue), random bias (orange), or via BN across the entire data set as in Figure~\ref{fig:backprop}
        (green).
        Each training was repeated $10$ times with learning rate cross-validation, and we plot the average test accuracy (of the best valid set learning rate) vs.\ learning epoch.  
        BN's ``smart initialization'' reaches a higher-performing solution faster because it provides SGD with a spline partition that is already adapted to the training dataset.
        %
        }
    \label{fig:test2}
    \end{minipage}
  \end{figure}

\section{Dataset Descriptions}

\textbf{MNIST}The MNIST database (Modified National Institute of Standards and Technology database) is a large database of handwritten digits that is commonly used for training various image processing systems. The database is also widely used for training and testing in the field of machine learning. It was created by "re-mixing" the samples from NIST's original datasets. The creators felt that since NIST's training dataset was taken from American Census Bureau employees, while the testing dataset was taken from American high school students, it was not well-suited for machine learning experiments. Furthermore, the black and white images from NIST were normalized to fit into a $28x28$ pixel bounding box and anti-aliased, which introduced grayscale levels.

The MNIST database contains $60,000$ training images and $10,000$ testing images. Half of the training set and half of the test set were taken from NIST's training dataset, while the other half of the training set and the other half of the test set were taken from NIST's testing dataset. The original creators of the database keep a list of some of the methods tested on it. In their original paper, they use a support-vector machine to get an error rate of 0.8\%.

\textbf{SVHN}SVHN is a real-world image dataset for developing machine learning and object recognition algorithms with minimal requirement on data preprocessing and formatting. It can be seen as similar in flavor to MNIST (e.g., the images are of small cropped digits), but incorporates an order of magnitude more labeled data (over $600,000$ digit images) and comes from a significantly harder, unsolved, real world problem (recognizing digits and numbers in natural scene images). SVHN is obtained from house numbers in Google Street View images.

\textbf{CIFAR10}The CIFAR-10 dataset (Canadian Institute For Advanced Research) is a collection of images that are commonly used to train machine learning and computer vision algorithms. It is one of the most widely used datasets for machine learning research. The CIFAR-10 dataset contains $60,000$ $32x32$ color images in $10$ different classes. The $10$ different classes represent airplanes, cars, birds, cats, deer, dogs, frogs, horses, ships, and trucks. There are 6,000 images of each class.

Computer algorithms for recognizing objects in photos often learn by example. CIFAR-10 is a set of images that can be used to teach a computer how to recognize objects. Since the images in CIFAR-10 are low-resolution $(32x32)$, this dataset can allow researchers to quickly try different algorithms to see what works. Various kinds of convolutional neural networks tend to be the best at recognizing the images in CIFAR-10.

\textbf{CIFAR100}This dataset is just like the CIFAR-10, except it has 100 classes containing 600 images each. There are 500 training images and 100 testing images per class. The 100 classes in the CIFAR-100 are grouped into 20 superclasses. Each image comes with a "fine" label (the class to which it belongs) and a "coarse" label (the superclass to which it belongs).

\bibliography{pnas-sample}